\newcommand{\bone}{{\mathbf{1}}}
\newcommand{\bX}{{ X}}
\newcommand{\mF}{\mathcal{F}}
\newtheorem{theorem}{Theorem}[section]
\newtheorem{lemma}{Lemma}[section]
\newenvironment{proof}{\trivlist\item[\hskip \labelsep{\sc Proof:}]}
 {\unskip\nobreak\ \lower.3ex\hbox{$\Box$}\endtrivlist}
\begin{document}

  \title{Stochastic Gradient Langevin Dynamics Algorithms with Adaptive Drifts}
  
 \author{Sehwan Kim, Qifan Song, and Faming Liang
\thanks{To whom correspondence should be addressed: Faming Liang.
  F. Liang is Professor (email: fmliang@purdue.edu),
  S. Kim is Graduate Student, and Q. Song is Assistant Professor, Department of Statistics,
  Purdue University, West Lafayette, IN 47907.
 }
 }
 
\maketitle

\begin{abstract}
Bayesian deep learning offers a principled way to address many issues concerning safety of artificial intelligence (AI), such as model uncertainty,model interpretability, and prediction bias. However, due to the lack of efficient Monte Carlo algorithms for sampling from the posterior of deep neural networks (DNNs), Bayesian deep learning has not yet powered our AI system. We propose a class of adaptive stochastic gradient Markov chain Monte Carlo (SGMCMC) algorithms, where the drift function is biased to enhance escape from saddle points and the bias is adaptively adjusted according to the gradient of past samples. We establish the convergence of the proposed algorithms under mild conditions, and demonstrate via numerical examples that the proposed algorithms can significantly outperform the existing SGMCMC algorithms, such as stochastic gradient Langevin dynamics (SGLD), stochastic gradient Hamiltonian Monte Carlo (SGHMC) and preconditioned SGLD, in both simulation and optimization tasks.

{\bf Keywords:} Adaptive MCMC, Adam,  Bayesian Deep Learning, Momentum, Stochastic Gradient MCMC 
\end{abstract}

\newpage

\section{Introduction}
\label{sec:intro}

During the past decade, deep learning has been the engine powering many successes of artificial 
intelligence (AI).  However, the deep neural network (DNN), as
the basic model of deep learning, still suffers from some fundamental
issues, such as model uncertainty, model interpretability, and prediction bias, 
which pose a high risk on the safety of AI.  In particular, the standard optimization 
algorithms such as stochastic gradient descent (SGD) 
produce only a point estimate for the DNN, where model uncertainty is completely ignored. 
The machine prediction/decision is blindly taken as accurate and precise, with
which the automated system might become life-threatening to humans if used in real-life settings.
The universal approximation ability of the DNN enables it to learn powerful representations that map high-dimensional features to an array of outputs. However, the representation is less interpretable,
 from which important features that govern the function of the system are hard to be identified,
 causing serious issues in human-machine trust. In addition, the DNN often contains an excessively large
 number of parameters. As a result, the training data tend to be 
 over-fitted and the prediction tends to be biased.
 
 As advocated by many researchers, see e.g. \cite{KendallGal2017} and \cite{ChaoChen2018}, Bayesian deep learning offers a principled way to address above issues. Under the Bayesian framework, a sparse 
 DNN can be learned by sampling from the posterior distribution 
 formulated with an appropriate prior distribution, see e.g. \cite{Liang2018BNN} and \cite{polson2018posterior}. For the sparse DNN, interpretability of the structure and consistency 
 of the prediction can be established under mild conditions, and 
 model uncertainty can be quantified based on the posterior samples, see e.g. \cite{Liang2018BNN}.
 However, due to the lack of efficient Monte Carlo algorithms for sampling from
 the posterior of DNNs, Bayesian deep learning has not yet powered our AI systems. 
 
 Toward the goal of efficient Bayesian deep learning, a variety of stochastic gradient Markov 
 chain Monte Carlo (SGMCMC) algorithms have been proposed in the literature, including 
 stochastic gradient Langevin dynamics (SGLD) \citep{Welling2011BayesianLV}, stochastic gradient 
 Hamiltonian Monte Carlo (SGHMC) \citep{Chen2014StochasticGH}, 
 and their variants. 
 One merit of the SGMCMC algorithms is that they are scalable,
 requiring at each iteration only the gradient on a mini-batch of data as in the SGD algorithm.
 Unfortunately, as pointed out in \cite{Dauphin2014}, DNNs often exhibit pathological curvature
 and saddle points, rendering the first-order gradient based algorithms, 
 such as SGLD, inefficient. 
 To accelerate convergence, the second-order gradient algorithms, such as stochastic gradient Riemannian Langevin dynamics (SGRLD)\citep{AhnKW2012, GirolamiG2011, PattersonTeh2013} and 
 stochastic gradient Riemannian Hamiltonian Monte Carlo (SGRHMC) \citep{Ma2015ACR}, have been developed.
 With the use of the Fisher information matrix of the target distribution, 
 these algorithms rescale the stochastic gradient noise to be isotropic near stationary points, which helps escape saddle points faster. 
 However, calculation of the Fisher information matrix can be time consuming, which makes these 
 algorithms lack scalability necessary for learning large DNNs. 
 Instead of using the exact Fisher information matrix, preconditioned SGLD (pSGLD) \citep{Li2016PreconditionedSG} approximates 
 it by a diagonal matrix adaptively updated with the current gradient information. 
 
 \cite{Ma2015ACR} provides a general framework for the existing SGMCMC algorithms (see Section \ref{recipe}), where the stochastic
 gradient of the energy function (i.e., the negative log-target distribution) 
 is restricted to be unbiased. However, this restriction is unnecessary.  As shown in the recent work, see e.g., \cite{DalalyanK2017}, \cite{SongLiang2020}, and \cite{bhatia2019bayesian}, 
 the stochastic gradient of the energy function can be biased as long as 
 its mean squared error can be upper bounded by an appropriate function of $\theta_t$, the current  sample of the stochastic gradient Markov chain. 
 On the other hand, a variety of adaptive SGD algorithms, such as 
 momentum \citep{Qian1999}, Adagrad \citep{Duchietal2011}, RMSprop \citep{TielemanH2012}, 
  and Adam \citep{KingmaB2015}, 
 have been proposed in the recent literature for dealing with the saddle point issue encountered in deep learning.  These algorithms adjust the  
 moving direction at each iteration according to the current gradient as well as the past ones. It was shown in \cite{StaibRK2019} that, 
  compared to SGD, these algorithms escape saddle points faster and can converge faster overall to the second-order stationary points. 
  
  Motivated by the two observations above, we propose a class of adaptive SGLD algorithms, 
  where a bias term is included in the drift function to enhance escape from saddle points 
  and accelerate the convergence in the presence of pathological curvatures. 
  The bias term can be adaptively adjusted based on the path of the sampler. 
  In particular, we propose to adjust the bias term based on the past gradients 
  in the flavor of adaptive SGD algorithms \citep{Ruder2016}. 
  We establish the convergence of the proposed adaptive SGLD 
  algorithms under mild conditions, and demonstrate 
  via numerical examples that the adaptive SGLD algorithms can significantly 
  outperform the existing SGMCMC algorithms, such as SGLD, SGHMC and pSGLD. 
 

 \section{A Brief Review of Existing SGMCMC Algorithms} \label{recipe}
 
 Let $\bX_N=(X_1,X_2,\ldots,X_N)$ denote a set of $N$ independent and identically distributed samples
 drawn from the distribution $f(x|\theta)$, where $N$ is the sample size and $\theta$ is the vector  of parameters. Let $p(\bX_N|\theta)=\prod_{i=1}^N f(X_i|\theta)$ denote the likelihood function, 
 let $\pi(\theta)$ denote the prior distribution of $\theta$, and let $U(\theta)=-\log p(\bX_N|\theta)-\log \pi(\theta)$ denote the energy function of the posterior distribution. If $\theta$ has a fixed 
 dimension and $U(\theta)$ is differentiable with respect to $\theta$, then the SGLD algorithm can be  used to simulate from the posterior, which iterates by
 \[
 \theta_{t+1}=\theta_t - \epsilon_{t+1} \nabla_{\theta} \tilde{U}(\theta_t) 
   + \sqrt{2\epsilon_{t+1} \tau} \eta_{t+1}, \ \ \eta_{t+1} \sim N(0,I_d),
  \]
  where $d$ is the dimension of $\theta$, $I_d$ is an $d\times d$-identity matrix, $\epsilon_{t+1}$ 
  is the learning rate, $\tau$ is the temperature, and $\nabla_{\theta} \tilde{U}(\theta)$ denotes 
  an estimate of $\nabla_{\theta} U(\theta)$ based on a mini-batch of data. 
  The learning rate can be kept as a constant or decreasing with iterations. For the former, the convergence of the algorithm was studied in \cite{Sato2014ApproximationAO} and \cite{DalalyanK2017}. 
  For the latter, the convergence of the algorithm was studied in \cite{teh2016consistency}.
  
  The SGLD algorithm has been extended in different ways. As mentioned previously, each of its existing 
  variants can be formulated as a special case of a general SGMCMC algorithm 
  given in \cite{Ma2015ACR}. Let $\xi$ denote an augmented 
  state, which may include some auxiliary components. For example, 
   SGHMC augments the state to $\xi=(\theta,v)$ by including  an auxiliary velocity component denoted by $v$.  Then the general SGMCMC algorithm is given by 
  \[
  \theta_{t+1}=\theta_t - \epsilon_{t+1} [D(\xi)+Q(\xi)] \nabla_{\xi} \tilde{H}(\xi) + 
   \Gamma(\xi) +\sqrt{2\epsilon_{t+1}\tau} Z_{t+1},  
  \]
  where $Z_{t+1} \sim N(0, D(\xi_t))$, $H(\xi)$ is the energy function of the 
  augmented system, $\nabla_{\xi} \tilde{H}(\xi)$ denotes an unbiased estimate of 
  $\nabla_{\xi} H(\xi)$, $D(\xi)$ is a positive semi-definite diffusion 
  matrix, $Q(\xi)$ is a skew-symmetric curl matrix, and
  $\Gamma_i(\xi)=\sum_{j=1}^d \frac{\partial}{\partial \xi_j}(D_{ij}(\xi)+Q_{ij}(\xi))$.
  The diffusion $D(\xi)$ and curl $Q(\xi)$ matrices can take various forms and the choice
  of the matrices will affect the rate of convergence of the sampler. 
  For example, for the SGHMC algorithm, we have $H(\xi)=U(\theta)+\frac{1}{2} v^T v$, 
  $D(\xi)=\begin{pmatrix} 0 & 0 \\ 0 & C \end{pmatrix}$ for some positive semi-definite matrix $C$, 
   and $Q(\xi)=\begin{pmatrix} 0 & -I \\ I & 0 \end{pmatrix}$. For the SGRLD algorithm, 
   we have $\xi=\theta$, $H(\xi)=U(\theta)$, $D(\xi)=G(\theta)^{-1}$, $Q(\xi)=0$, 
   where $G(\theta)$ is the Fisher information matrix of the posterior distribution. 
   By rescaling the parameter updates according to geometry information of the manifold, 
   SGRLD generally converges faster than SGLD. However, calculating the Fisher information 
   matrix and its inverse can be time consuming when the dimension of $\theta$ is high and the total sample size $N$ is large. 
   To address this issue, pSGLD approximates $G(\theta)$ using a diagonal matrix 
   and sequentially updates the approximator using the current gradient information. To be more 
   precise, it is given by 
   \[
   \begin{split} 
   G(\theta_{t+1})&={\rm diag}(1 \oslash (\lambda \bf{1}+ \sqrt{V(\theta_{t+1})})), \\
   V(\theta_{t+1})&=\beta V(\theta_{t})+(1-\beta) \nabla_{\theta} \tilde{U}(\theta_t) \odot
   \nabla_{\theta} \tilde{U}(\theta_t), 
  \end{split}
  \]
  where $\lambda$ denotes a small constant, 
  $\odot$ and $\oslash$ represent element-wise vector product and division, respectively.

 \section{Stochastic Gradient Langevin Dynamics with Adaptive Drifts} \label{ASGLDsect}
 
 Motivated by the observations that the stochastic gradient $\nabla_{\theta} \tilde{U}(\theta)$ used 
 in SGLD is not necessarily unbiased and that the past gradients can be used to 
 enhance escape from saddle points for SGD, we propose a class of adaptive SGLD algorithms, 
 where the past gradients are used to accelerate the convergence 
 of the sampler by forming a bias to the drift 
 at each iteration. A general form of the adaptive SGLD algorithm is given by 
 \begin{equation} \label{Adapteq}
 \theta_{t+1}=\theta_t - \epsilon_{t+1} (\nabla_{\theta} \tilde{U}(\theta_t) + a A_t)
   + \sqrt{2\epsilon_{t+1} \tau} \eta_{t+1},
  \end{equation}
  where $A_t$ is the adaptive bias term, $a$ is called the 
  bias factor,  and $\eta_{t+1} \sim N(0,I_d)$. 
 Two adaptive SGLD algorithms are given in what follows.
 In the first algorithm, the  bias term is constructed based on 
 the momentum algorithm \citep{Qian1999}; 
 and in the second algorithm, the bias term is constructed based on the 
  Adam algorithm \citep{KingmaB2015}.  
 
\subsection{Momentum SGLD}

It is known that SGD has trouble in navigating ravines, i.e., the regions where the energy surface curves much 
more steep in one dimension than in another, which are common around local energy minima \citep{Ruder2016,Sutton1986}. In this scenario, SGD oscillates across the slopes of the ravine 
while making hesitant progress towards the local energy minima. 
To accelerate SGD in the relevant direction and dampen oscillations, 
the momentum algorithm \citep{Qian1999} updates the moving direction at each iteration by adding 
a fraction of the moving direction of the past iteration, the so-called momentum term, 
to the current gradient. By accumulation, the momentum term increases updates 
for the dimensions whose gradients pointing in the same directions and 
reduces updates for the dimensions whose gradients change 
directions. As a result, the oscillation is reduced and the convergence is accelerated. 

\begin{algorithm}[htbp]
   \caption{MSGLD}
   \label{alg:example}
\begin{algorithmic}
   \STATE {\bfseries Input:} Data $\{x_i\}_{i=1}^N$, 
    subsample size $n$, smoothing factor $0<\beta_1<1$, bias factor $a$, temperature $\tau$, 
     and learning rate $\epsilon$;
   \STATE {\bfseries Initialization:} $\theta_0$ from an appropriate distribution, and $m_0=0$.
   \FOR{$i=1,2,\dots,$}
   \STATE Draw a mini-batch of data $\{x_j^*\}_{j=1}^n$, and calculate 
   \STATE $\theta_{t+1}=\theta_{t}-\epsilon (\nabla\tilde{U}(\theta_{t})+a m_{t})+e_{t+1}$,
   \STATE $m_{t}=\beta_1 m_{t-1}+(1-\beta_1)\nabla\tilde{U}(\theta_{t-1})$, \\
    where $e_{t+1} \sim N(0, 2 \tau \epsilon I_d)$, and $d$ is the dimension of $\theta$. 
   \ENDFOR
\end{algorithmic}
\end{algorithm}

As an analogy of the momentum algorithm in stochastic optimization, we propose the so-called momentum SGLD (MSGLD) algorithm, where
the momentum is calculated as an exponentially decaying average of 
past stochastic gradients 
and added as a bias term to the drift of SGLD. 
The resulting algorithm is depicted in Algorithm \ref{alg:example}, where
a constant learning rate $\epsilon$ is considered for simplicity. However, as mentioned in the Appendix, the algorithm also works for the case that the learning rate decays 
with iterations. The convergence of the algorithm is established in Theorem \ref{them:1}, whose proof is given in the Appendix. 

\begin{theorem}[Ergodicity of MSGLD] \label{them:1} 
Suppose the conditions (A.1)-(A.5) hold (given in Appendix),  $\beta_1 \in (0,1]$ is a constant, and the learning rate $\epsilon$ is sufficiently small. 
Then for any smooth function $\phi(\theta)$,  
\[
\frac{1}{L} \sum_{k=1}^L \phi(\theta_k)- \int_{\Theta} \phi(\theta) \pi_*(\theta) d \theta \stackrel{p}{\to} 0, \quad \mbox{as $L\to \infty$},
\]
where $\pi_*$ denotes the posterior distribution of $\theta$, and $\stackrel{p}{\to}$ denotes convergence in probability. 
\end{theorem}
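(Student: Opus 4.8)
The plan is to run the now-standard Poisson-equation and martingale-decomposition argument for SGLD-type recursions (see, e.g., \citep{teh2016consistency,Sato2014ApproximationAO,DalalyanK2017}), modified to absorb the two features peculiar to MSGLD: the stochastic gradient $\nabla\tilde U$ need not be unbiased, its error being controlled only through a mean-squared-error bound in (A.1)--(A.5), and the drift carries the extra adaptive term $a m_t$.

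Step 1 (stability): using the dissipativity and growth/smoothness hypotheses in (A.1)--(A.5) together with the smallness of $\epsilon$, a Foster--Lyapunov argument delivers $\sup_t\mathbb{E}\|\theta_t\|^{2p}<\infty$ for the relevant $p$, and then, through the gradient-growth bound and the explicit form $m_t=(1-\beta_1)\sum_{j<t}\beta_1^{\,t-1-j}\nabla\tilde U(\theta_j)$, also $\sup_t\mathbb{E}\|m_t\|^{2p}<\infty$. Step 2 (Poisson equation): let $\mathcal{L}=-\nabla U(\theta)^\top\nabla_\theta+\tau\Delta_\theta$ be the generator of the Langevin diffusion with invariant law $\pi_*$, and let $\psi$ solve $\mathcal{L}\psi=\phi-\bar\phi$, $\bar\phi=\int_\Theta\phi(\theta)\pi_*(\theta)\,d\theta$; the conditions on $U$ make $\psi$ smooth with derivatives up to third order of at most polynomial growth. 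Step 3 (one-step expansion): Taylor-expanding $\psi(\theta_{t+1})$ about $\theta_t$ along the MSGLD update, conditioning on the natural filtration $\mathcal{F}_t$, and using $\mathbb{E}[e_{t+1}e_{t+1}^\top\mid\mathcal{F}_t]=2\tau\epsilon I_d$ gives
\[
\mathbb{E}[\psi(\theta_{t+1})\mid\mathcal{F}_t]-\psi(\theta_t)
=\epsilon\big(\phi(\theta_t)-\bar\phi\big)-\epsilon\,\nabla\psi(\theta_t)^\top\big(a m_t+b_t\big)+r_t,
\]
with $b_t=\mathbb{E}[\nabla\tilde U(\theta_t)\mid\mathcal{F}_t]-\nabla U(\theta_t)$ and $\mathbb{E}|r_t|=O(\epsilon^{3/2})$ by Step 1.

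Step 4 (telescope and martingale): summing over $t=1,\dots,L$, dividing by $L\epsilon$, and rewriting $\sum_t\big(\mathbb{E}[\psi(\theta_{t+1})\mid\mathcal{F}_t]-\psi(\theta_t)\big)$ as the telescoping sum $\psi(\theta_{L+1})-\psi(\theta_1)$ minus the martingale $M_L=\sum_{t=1}^L\big(\psi(\theta_{t+1})-\mathbb{E}[\psi(\theta_{t+1})\mid\mathcal{F}_t]\big)$ produces
\[
\frac1L\sum_{t=1}^L\big(\phi(\theta_t)-\bar\phi\big)
=\frac{\psi(\theta_{L+1})-\psi(\theta_1)}{L\epsilon}-\frac{M_L}{L\epsilon}
+\frac aL\sum_{t=1}^L\nabla\psi(\theta_t)^\top m_t
+\frac1L\sum_{t=1}^L\nabla\psi(\theta_t)^\top b_t
-\frac1{L\epsilon}\sum_{t=1}^L r_t .
\]
Step 5 (control of the five terms): the boundary term is of order $1/L$ in $L^1$ by Step 1; $\mathbb{E}M_L^2=\sum_t\mathbb{E}\big(\psi(\theta_{t+1})-\mathbb{E}[\psi(\theta_{t+1})\mid\mathcal{F}_t]\big)^2=O(L\epsilon)$, so $M_L/(L\epsilon)\to0$ in mean square; the remainder term is $O(\sqrt\epsilon)$; the gradient-bias term is bounded by Cauchy--Schwarz and the mean-squared-error hypothesis on $\nabla\tilde U$. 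The surviving term, $\tfrac aL\sum_t\nabla\psi(\theta_t)^\top m_t$, is the heart of the matter: because $\mathbb{E}\|m_t\|$ is merely $O(1)$ it cannot be handled crudely, and one instead expands $m_t$ through its geometric-sum form, uses $\mathbb{E}_{\pi_*}[\nabla U]=0$ together with the geometric decay of correlations furnished by the drift/minorization conditions in (A.1)--(A.5), and invokes the smallness (and, in the appendix version, the vanishing) of the learning rate, to conclude that this term is negligible as $L\to\infty$. Putting the estimates together, the right-hand side converges to $0$ in probability, which is the claim.

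The main obstacle I anticipate is exactly the control of the adaptive-drift term $\tfrac aL\sum_t\nabla\psi(\theta_t)^\top m_t$: the momentum adds a bias to the drift of the same order as the true gradient, so the argument has to exploit both the memory structure of $m_t$ and the mixing of the sampler, and it is here that the hypothesis that $\epsilon$ be sufficiently small -- and its decaying-learning-rate counterpart in the appendix -- is essential.
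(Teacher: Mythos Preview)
Your Poisson-equation strategy is the same engine that drives the paper's proof, but the paper's execution is much shorter and, crucially, it never isolates the term you flag as ``the heart of the matter.'' Instead of separating the momentum bias $am_t$ from the mini-batch noise $\xi_{t+1}$, the paper lumps them into a single drift deviation $\zeta_{k+1}=am_k+\xi_{k+1}$ and proves only that $\sup_k E\|\zeta_k\|^2<\infty$: the geometric-sum representation of $m_k$ together with the uniform $L^2$ bound on $\theta_k$ (your Step~1) and smoothness of $U$ give $E\|E(\zeta_{k+1}\mid\mathcal F_k)\|^2\le a^2C_1$ and $E\|\zeta_{k+1}-E(\zeta_{k+1}\mid\mathcal F_k)\|^2\le C_2\bigl[1+a^2(1-\beta_1)/(1+\beta_1)\bigr]$. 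Your Steps~2--5 are then replaced wholesale by one invocation of an MSE lemma adapted from \cite{ChenNL15},
\[
E\Bigl|\tfrac1L\sum_{k=1}^L\phi(\theta_k)-\bar\phi\Bigr|^2 \;\lesssim\; \tfrac{1}{L^2}\sum_{k=1}^L E\|\zeta_k\|^2+\tfrac{1}{L\epsilon}+\epsilon^2,
\]
followed by Chebyshev. What your five-term decomposition does explicitly, the paper hides inside this lemma.

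The place where the two routes genuinely differ is your problematic term $\tfrac aL\sum_t\nabla\psi(\theta_t)^\top m_t$. You are right that a crude $O(1)$ moment bound on $m_t$ leaves this term $O(1)$ rather than $o(1)$; the paper's lemma gets the extra factor of $L^{-1}$ from the $L^{-2}$ prefactor on $\sum E\|\zeta_k\|^2$, which in the original \cite{ChenNL15} result relies on $\zeta_k$ being mean-zero. The paper does not run a mixing argument here; it instead remarks that because $\{\zeta_k\}$ is an \emph{autoregressive} sequence the \cite{ChenNL15} proof ``still goes through,'' and leaves it at that. Your proposed alternative---expand $m_t$ geometrically, use $E_{\pi_*}[\nabla U]=0$ and decay of correlations---is aiming at the same cancellation, but as written it does not close: the mixing you invoke would be mixing of the \emph{biased} MSGLD chain, whose invariant law is not $\pi_*$, so the ergodic limit of $\tfrac1L\sum_t\nabla\psi(\theta_t)^\top m_t$ is a stationary covariance under that perturbed law and need not vanish without a further $O(\epsilon)$ comparison. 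In short, the paper buys brevity by pushing the delicate point into the cited lemma; your route is more transparent about where the difficulty sits but would need a sharper argument than the sketch in Step~5 to actually resolve it.
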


Algorithm \ref{alg:example} contains a few parameters, including the subsample size $n$, smoothing factor $\beta_1$, bias factor $a$, temperature $\tau$, and learning rate $\epsilon$. Among these parameters, 
$n$, $\tau$ and $\epsilon$ are shared with SGLD and can be set 
 as in SGLD. Refer to
\cite{Nagapetyan2017} and \cite{NemethF2019} for more discussions on their settings. 
The smoothing factor $\beta_1$ 
is a constant, which is typically set to 0.9. The bias factor $a$ is also a constant, 
which is typically set to 1 or a slightly large value.

\subsection{Adam SGLD} 
 
 The Adam algorithm \citep{KingmaB2015} has been widely used in deep learning, which typically 
 converges much faster than SGD. 
 Recently, \cite{StaibRK2019} showed that Adam can be viewed as a 
 preconditioned SGD algorithm, where the preconditioner is estimated in an on-line manner and 
 it helps escape saddle points by rescaling the stochastic gradient noise to be isotropic 
 near stationary points.

  \begin{algorithm}[htbp]
   \caption{ASGLD}
   \label{alg:example2}
\begin{algorithmic}
   \STATE {\bfseries Input:} Data $\{x_i\}_{i=1}^N$, subsample size $n$, 
    smoothing factors $\beta_1$ and $\beta_2$, bias factor $a$, temperature $\tau$, and 
    learning rate $\epsilon$;
   \STATE {\bfseries Initialization:} $\theta_0$ from appropriate distribution, 
    $m_0=0$ and $V_0=0$; 
   \FOR{$i=1,2,\dots,$}
   \STATE Draw a mini-batch of data $\{x_j^*\}_{j=1}^n$, and calculate 
   \STATE $\theta_{t+1}=\theta_{t}-\epsilon(\nabla\tilde{U}(\theta_{t})+am_{t}\oslash \sqrt{V_{t}+\lambda {\bf 1}})+e_{t+1},$
   \STATE $m_{t}=\beta_1 m_{t-1}+(1-\beta_1)\nabla\tilde{U}(\theta_{t-1})$,
   \STATE $V_{t}=\beta_2 V_{t-1}+(1-\beta_2)\nabla\tilde{U}(\theta_{t-1})\odot\nabla\tilde{U}(\theta_{t-1})$, \\
   where $\lambda$ is a small constant added to avoid zero-divisors, 
   $e_{t+1} \sim N(0, 2 \tau \epsilon I_d)$, and $d$ is the dimension of $\theta$. 
   \ENDFOR
\end{algorithmic}
\end{algorithm}

 Motivated by this result, we propose the so-called Adam SGLD (ASGLD) algorithm. Ideally, we 
 would construct the adaptive bias term as follows:
 \begin{equation} \label{Adameq1}
 \begin{split}
  m_t &=\beta_1 m_{t-1} +(1-\beta_1) \nabla \tilde{U}(\theta_{t-1}), \\ 
  \tilde{V}_t &=\beta_2 \tilde{V}_{t-1}+(1-\beta_2) \tilde{U}(\theta_{t-1}) \tilde{U}(\theta_{t-1})^T, \\ 
  \tilde{A}_t &=\tilde{V}_t^{-1/2} m_t, 
  \end{split}
  \end{equation}
  where $\beta_1$ and $\beta_2$ are smoothing factors for the first and second 
  moments of stochastic gradients, respectively. 
  Since $\tilde{V}_t$ can be viewed as an 
  approximator of the true second moment matrix $E(\nabla_{\theta} \tilde{U}(\theta_{t-1}) \nabla_{\theta} \tilde{U}(\theta_{t-1})^T)$
  at iteration $t-1$, $\tilde{A}_t$ can viewed as the rescaled momentum which is isotropic near 
  stationary points. If the bias factor $a$ is chosen appropriately, ASGLD 
  is expected to converge very fast. In particular, the bias term may guide the 
  sampler to converge to a global optimal region quickly, similar 
   to  Adam in optimization. However, when the dimension of 
  $\theta$ is high, calculation of $\tilde{V}_t$ and $\tilde{V}_t^{-1/2}$ can be time consuming. 
  To accelerate computation, we propose to approximate $\tilde{V}_t$ using a diagonal matrix 
   as in pSGLD. This leads to Algorithm \ref{alg:example2}. 
   The convergence of the algorithm 
   is established in Theorem \ref{them:2}, whose proof is given in the Appendix. 
   
\begin{theorem} [Ergodicity of ASGLD] \label{them:2} 
Suppose the conditions (A.1)-(A.5) hold (given in Appendix),  $\beta_1^2<  \beta_2$ are two constants between 0 and 1, and the learning rate 
$\epsilon$ is sufficiently small.
Then for any smooth function $\phi(\theta)$,  
\[
\frac{1}{L} \sum_{k=1}^L \phi(\theta_k)- \int_{\Theta} \phi(\theta) \pi_*(\theta) d \theta \stackrel{p}{\to} 0, \quad \mbox{as $L\to \infty$},
\]
where $\pi_*$ denotes the posterior distribution of $\theta$, and $\stackrel{p}{\to}$ denotes convergence in probability. 
\end{theorem}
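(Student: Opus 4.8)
The plan is to treat ASGLD as SGLD driven by the biased stochastic gradient $\widetilde g_t:=\nabla\tilde U(\theta_t)+a\,m_t\oslash\sqrt{V_t+\lambda\bone}$, whose conditional expectation given the past is $\nabla U(\theta_t)+aA_t$ with $A_t:=m_t\oslash\sqrt{V_t+\lambda\bone}$; thus it differs from the exact Langevin drift $\nabla U(\theta_t)$ only by the adaptive, $\mathcal{F}_t$-measurable bias $aA_t$. Once $A_t$ is shown to be uniformly bounded and the auxiliary recursions $(m_t,V_t)$ are shown to be stable, the conclusion follows from the ergodicity theory for SGLD with an inexact drift \citep{DalalyanK2017,SongLiang2020}, along the same lines as the proof of Theorem~\ref{them:1}; the only genuinely new element relative to MSGLD is the diagonal preconditioner $V_t$ sitting under a square root in the denominator.

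The first step is to control the auxiliary sequences, and this is exactly where the hypothesis $\beta_1^2<\beta_2$ is used. Since $V_t+\lambda\bone\succeq\lambda\bone$ by construction, $(V_t+\lambda\bone)^{-1/2}$ is bounded; expanding the $i$-th components as the finite geometric sums $m_{t,i}=(1-\beta_1)\sum_{j\ge 0}\beta_1^{j}g_{t-1-j,i}$ and $V_{t,i}=(1-\beta_2)\sum_{j\ge 0}\beta_2^{j}g_{t-1-j,i}^2$ (with $g_{s,i}$ the $i$-th coordinate of the mini-batch gradient and terms of negative index set to $0$), and writing $\beta_1^{j}=(\beta_1^2/\beta_2)^{j/2}\beta_2^{j/2}$, Cauchy--Schwarz gives $m_{t,i}^2\le\frac{(1-\beta_1)^2}{(1-\beta_1^2/\beta_2)(1-\beta_2)}\,V_{t,i}$. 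Hence $|A_{t,i}|\le C_0(\beta_1,\beta_2)$ for an absolute constant, so $\|aA_t\|_\infty\le aC_0$: the drift perturbation is uniformly bounded regardless of the gradient magnitudes, which is cleaner than (and plays the same role as) the bound on $am_t$ used for MSGLD.

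Next I would establish uniform-in-$t$ moment bounds for $\theta_t$. Under the dissipativity and smoothness assumptions (A.1)-(A.5), the exact drift obeys a one-step Lyapunov inequality for $\mathcal{V}(\theta)=1+\|\theta\|^2$; because $aA_t$ is bounded, the extra term $-a\langle\theta,A_t\rangle\le aC_0\|\theta\|$ is dominated at large $\|\theta\|$ by the dissipative term $-\alpha\|\theta\|^2$, so the same inequality persists for ASGLD with slightly degraded constants, giving $\sup_t E\,\mathcal{V}(\theta_t)^{q}<\infty$ and, by the analogous argument, stability of $(m_t,V_t)$. I would then introduce the Poisson equation $\mathcal{L}\psi=\phi-\int\phi\,d\pi_*$ for the overdamped Langevin generator $\mathcal{L}f=-\langle\nabla U,\nabla f\rangle+\tau\Delta f$ whose invariant law is $\pi_*$, invoke the regularity of $\psi$ granted by (A.1)-(A.5) (polynomial growth of $\psi$ and its first two derivatives), Taylor-expand $\psi(\theta_{t+1})$ about $\theta_t$, take conditional expectations using that $\nabla\tilde U$ is unbiased and $A_t$ is $\mathcal{F}_t$-measurable, and telescope $t=1,\dots,L$. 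Dividing by $L$ yields
\[
\frac1L\sum_{k=1}^L\Big(\phi(\theta_k)-\int\phi\,d\pi_*\Big)=\frac{\psi(\theta_{L+1})-\psi(\theta_1)}{L\epsilon}-\frac{M_L}{L\epsilon}+\frac aL\sum_{k=1}^L\big\langle\nabla\psi(\theta_k),A_k\big\rangle+R_L,
\]
where $M_L$ is a martingale with $\langle M\rangle_L=O(L\epsilon)$ and $R_L=O_p(\epsilon)$ absorbs the $O(\epsilon^2)$ discretization terms and the mini-batch-noise cross terms (bounded via the moment estimates). The first term is $O_p(1/(L\epsilon))\to0$ since $\psi$ has polynomial growth and $\sup_t E\,\mathcal{V}(\theta_t)^{q}<\infty$, and the second is $O_p((L\epsilon)^{-1/2})\to0$ by the martingale strong law.

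The main obstacle is the bias term $\frac aL\sum_k\langle\nabla\psi(\theta_k),A_k\rangle$: $A_k$ depends on the entire trajectory through $(m_k,V_k)$, is not slowly varying, and cannot be replaced by a function of $\theta_k$ alone. I would handle it by exploiting that, for $\epsilon$ small, $\theta$ moves only $O(\sqrt\epsilon)$ over the $O(1)$ effective memory horizon of the geometric averages, so $(m_k,V_k)$ stays close to the value prescribed by the local gradient statistics at $\theta_k$; this lets me replace $A_k$ by a deterministic function of $\theta_k$ up to an $o_p(1)+O(\epsilon)$ error, bound the resulting ergodic average using the moment estimates, and fold everything into the ``$\epsilon$ sufficiently small'' hypothesis (in the decaying-learning-rate version discussed in the Appendix, the weighted analogue of this term vanishes outright, yielding the exact limit $\int\phi\,d\pi_*$). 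Combining the four terms then gives $\frac1L\sum_{k=1}^L\phi(\theta_k)\to\int\phi\,d\pi_*$ in probability, which is the assertion of the theorem.
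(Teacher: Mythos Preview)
Your Cauchy--Schwarz step is exactly the one piece of work the paper does: writing $\beta_1^{j}=(\beta_1^2/\beta_2)^{j/2}\beta_2^{j/2}$ and applying Cauchy--Schwarz coordinatewise yields $m_{k,i}^2\le C^2\,V_{k,i}$ with $C^2=\dfrac{(1-\beta_1)^2}{(1-\beta_2)(1-\beta_1^2/\beta_2)}$, hence $\|m_k\oslash\sqrt{V_k+\lambda\bone}\|\le C\sqrt{p}$ almost surely. From there, however, the paper's route is far shorter than yours. It simply sets $\zeta_{k+1}:=a\,m_k\oslash\sqrt{V_k+\lambda\bone}+\xi_{k+1}$, observes $E\|\zeta_{k+1}\|^2\le 2a^2C^2p+C_2$ uniformly in $k$ (combining the a.s.\ bound on $A_k$ with the mini-batch moment bound $E\|\xi_{k+1}\|^2\le C_2$ coming from (A.4) and the uniform $L^2$ bound on $\theta_k$), and then invokes the black-box Lemma~A.1 (adapted from Chen, Ding and Carin), which says that a uniform second-moment bound on the drift perturbation suffices for the MSE estimate $E|\hat\phi-\bar\phi|^2\lesssim L^{-2}\sum_k E\|\zeta_k\|^2+1/(L\epsilon)+\epsilon^2$. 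Chebyshev then finishes. There is no separate treatment of the bias term and no need for a freezing argument.

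Your hands-on Poisson-equation expansion is a legitimate way to rederive what sits inside Lemma~A.1, but the paragraph about the ``main obstacle'' $\frac{a}{L}\sum_k\langle\nabla\psi(\theta_k),A_k\rangle$ has a real gap. Replacing $A_k$ by a deterministic $\bar A(\theta_k)$, even if that could be justified, does not make the average small: it would converge to $a\int\langle\nabla\psi,\bar A\rangle\,d\pi_*$, generically a nonzero $O(1)$ constant with no factor of $\epsilon$ in front, so there is nothing to ``fold into the $\epsilon$ sufficiently small hypothesis.'' The paper avoids this trap by never isolating the bias; it treats $\zeta_k$ as a whole and controls only $E\|\zeta_k\|^2$, noting that Chen et~al.'s argument still goes through because $\{\zeta_k\}$ is an auto-regressive (rather than i.i.d.\ mean-zero) sequence. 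After your Cauchy--Schwarz bound, you should simply cite Lemma~A.1 rather than attempt the freezing argument.
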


Compared to Algorithm \ref{alg:example}, ASGLD contains one more parameter, $\beta_2$,
which works as the smoothing factor for the second moment term and 
is suggested to take a value of 0.999 in this paper. 
   
\subsection{Other Adaptive SGLD Algorithms}

In addition to the Momentum and Adam algorithms, other optimization algorithms, such as 
AdaMax \citep{KingmaB2015} and Adadelta \citep{AdaDelta2012}, can also be incorporated into SGLD to accelerate its convergence. Other than the bias term, the past gradients can also be used to 
construct an adaptive preconditioner matrix   
in a similar way to pSGLD. Moreover, the adaptive bias and adaptive preconditioner matrix can be 
used together to accelerate the convergence of SGLD.

\section{Illustrative Examples} \label{ASGLDnum}

Before applying the adaptive SGLD algorithms to DNN models,  we first illustrate 
their performance on three low-dimensional examples. The first example is a multivariate 
Gaussian distribution with high correlation values. The second example is a multi-modal 
distribution, which mimics the scenario with multiple local energy minima.
The third example is more complicated, which mimics 
the scenario with long narrow ravines. 
 
\subsection{A Gaussian distribution with high correlation values}

Suppose that we are interested in drawing samples from $\pi(\theta)$, 
 a Gaussian distribution with the mean zero  and the covariance matrix 
 $\Sigma=\begin{pmatrix} 1 &0.9 \\ 0.9 &1 \end{pmatrix}$. 
For this example, we have 
$\nabla_{\theta} U(\theta)=\Sigma^{-1}\theta$, and set $\nabla\tilde{U}(\theta)=\nabla U(\theta)+e$ in simulations, where $\theta=(\theta_1,\theta_2)^T \in \mathbb{R}^2$ and $e\sim N(0,I_2)$.  For ASGLD, we set 
 $\tau=1$, $a=0.1$, $\epsilon=0.1$, $\beta_1=0.9$ and $\beta_2=0.999$. 
 For MSGLD, we set $\tau=1$, 
 $a=0.01$, $\beta_1=0.9$ and $\epsilon=0.1$. For comparison, SGLD was also run for this example with the same learning rate $\epsilon=0.1$. Figure \ref{correlated} shows that both ASGLD and MSGLD work well for this example, where the left panel shows that they can produce the same accurate estimate as SGLD for the covariance matrix as the number of iterations becomes large.  

\begin{figure}[htbp]
\begin{center}
\centerline{\includegraphics[height=7cm, width=\columnwidth]{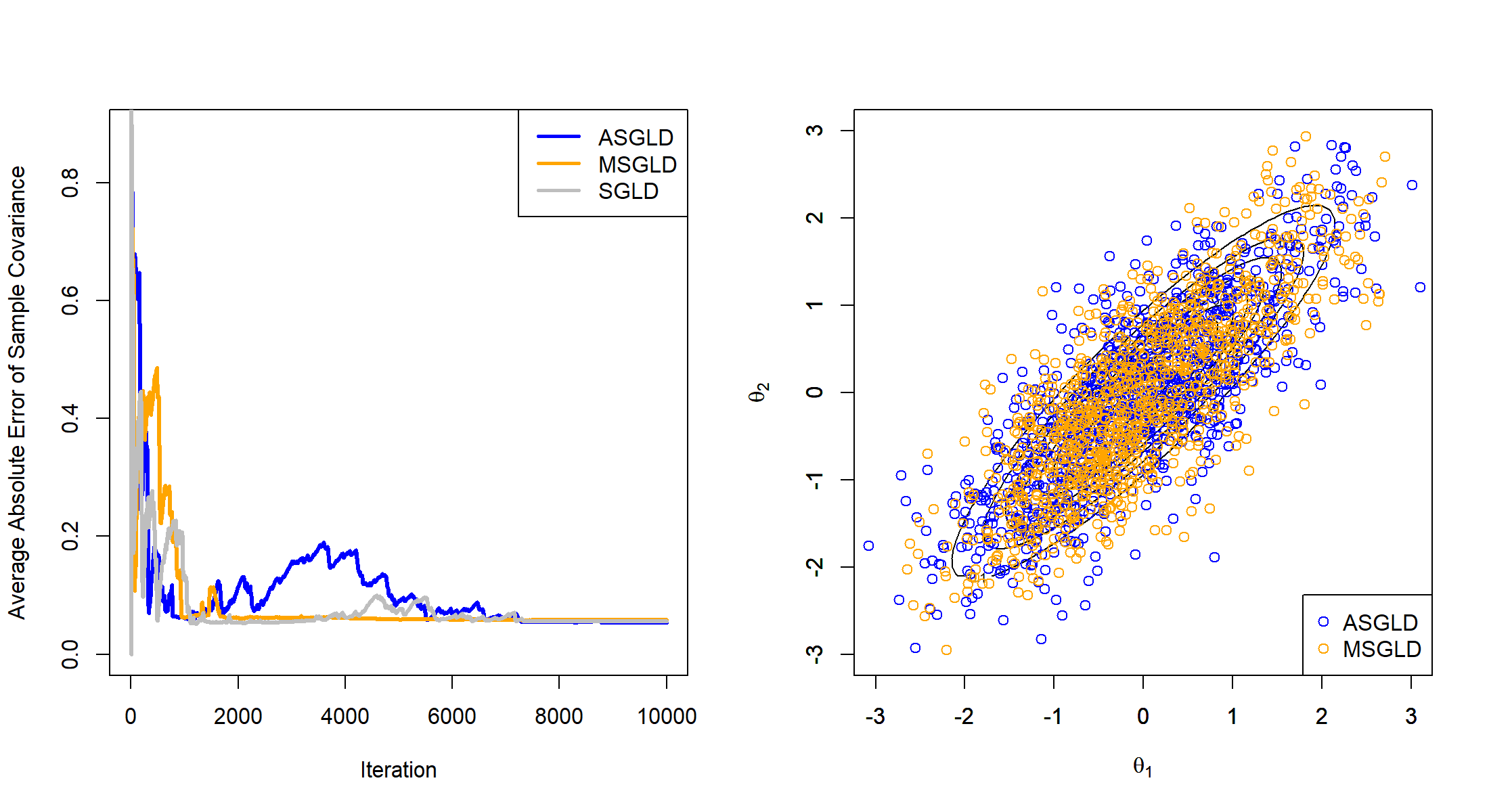}}
\caption{Performance of adaptive SGLD algorithms: (left) average absolute errors of the sample covariance matrix produced by SGLD, ASGLD and MSGLD along with iterations, and (right) scatter plots of the samples generated by ASGLD and MSGLD during their first 1000 iterations. }
\label{correlated}
\end{center}
\vskip -0.4in
\end{figure}

\subsection{A multi-modal distribution}

 The target distribution is a 2-dimensional 5-component mixture Gaussian distribution, whose density function is given by $\pi(\theta)=\sum_{i=1}^5\frac{1}{10\pi}\exp(-\|\theta-\mu_i\|^2)$, where $\mu_1=(-3,-3)^T, \mu_2=(-3,0)^T, \mu_3=(0,0)^T, \mu_4=(3,0)^T, \mu_5=(3,3)^T$. For this example, we considered the  natural gradient variational inference (NGVI) algorithm \citep{Kahn2019}, which is known to converge very fast in the variational inference field, as the baseline algorithm for comparison. 

For adaptive SGLD method, both ASGLD and MSGLD were applied to this example. We set $\nabla_{\theta} \tilde{U}(\theta)=\nabla U(\theta)+ e$, where $e\sim N(0,I_2)$ and
 $U(\theta)=-\log \pi(\theta)$. For a fair comparison, each algorithm was run in 6.0 CPU minutes. The numerical results were summarized in Figure \ref{multimodal1}, which shows the contour  of the energy function and its estimates by NGVI, MSGLD and ASGLD. The plots indicate that MSGLD and ASGLD are better at exploring the multi-modal distributions than NGVI.

\begin{figure}[htbp]
\vskip 0.2in
\begin{center}
\centerline{\includegraphics[ width=\textwidth]{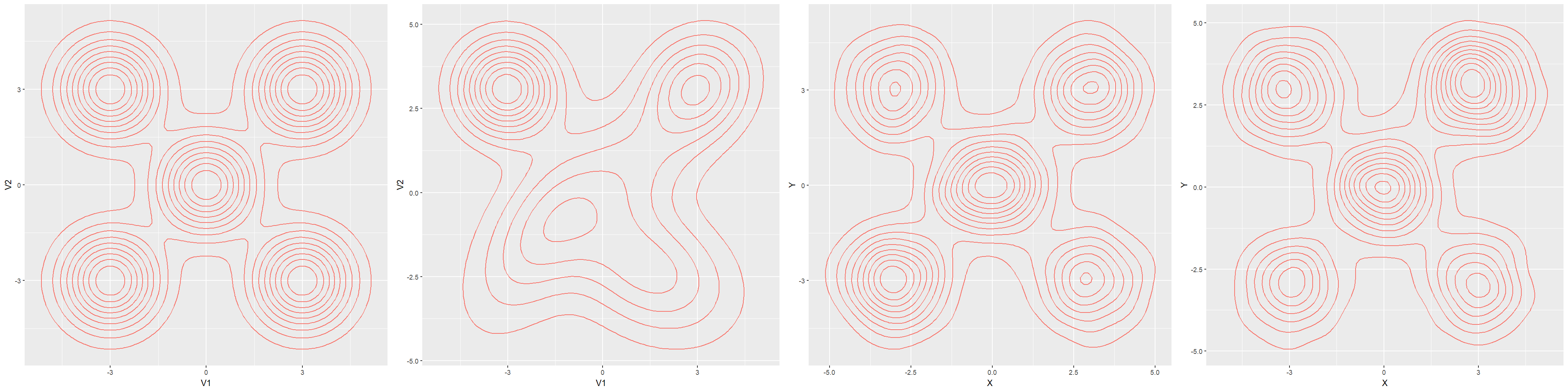}}
\caption{
Contour plots of the true energy function (left) and its estimates by NGVI (middle left), MSGLD (middle right), and ASGLD (right). 
}
\label{multimodal1}
\end{center}
\vskip -0.2in
\end{figure}

\subsection{A distribution with long narrow energy ravines}

Consider a nonlinear regression 
\[
 y=f_{\theta}(x)+\epsilon,
\quad \epsilon\sim N(0,1),
 \]
 where $x \sim Unif[-2,4]$,  $\theta=(\theta_1,\theta_2)^T \in \mathbb{R}^2$,
 and $f_{\theta}(x)=(x-1)^2+2\sin(\theta_1 x)+\frac{1}{30}\theta_1 +\cos(\theta_2 x-1)-\frac{1}{20}\theta_2$.
As $\theta$ increases, the function $f_{\theta}(x)$ fluctuates more severely. Figure 
\ref{ravine1} depicts the regression, where we set 
 $\theta_1=20$ and $\theta_2=10$. Since the random error $\epsilon$ is relatively large 
compared to the local fluctuation of $f_{\theta}(x)$, i.e., $2\sin(\theta_1 x)+\frac{1}{30}\theta_1 +\cos(\theta_2 x-1)-\frac{1}{20}\theta_2$, identification of the exact values 
of $(\theta_1, \theta_2)$ can be very hard, especially when the subsample size $n$ is small.

\begin{figure}[htbp]
    \begin{subfigure}[t]{0.48\textwidth}\caption{}
        \includegraphics[height=6cm, width=\textwidth]{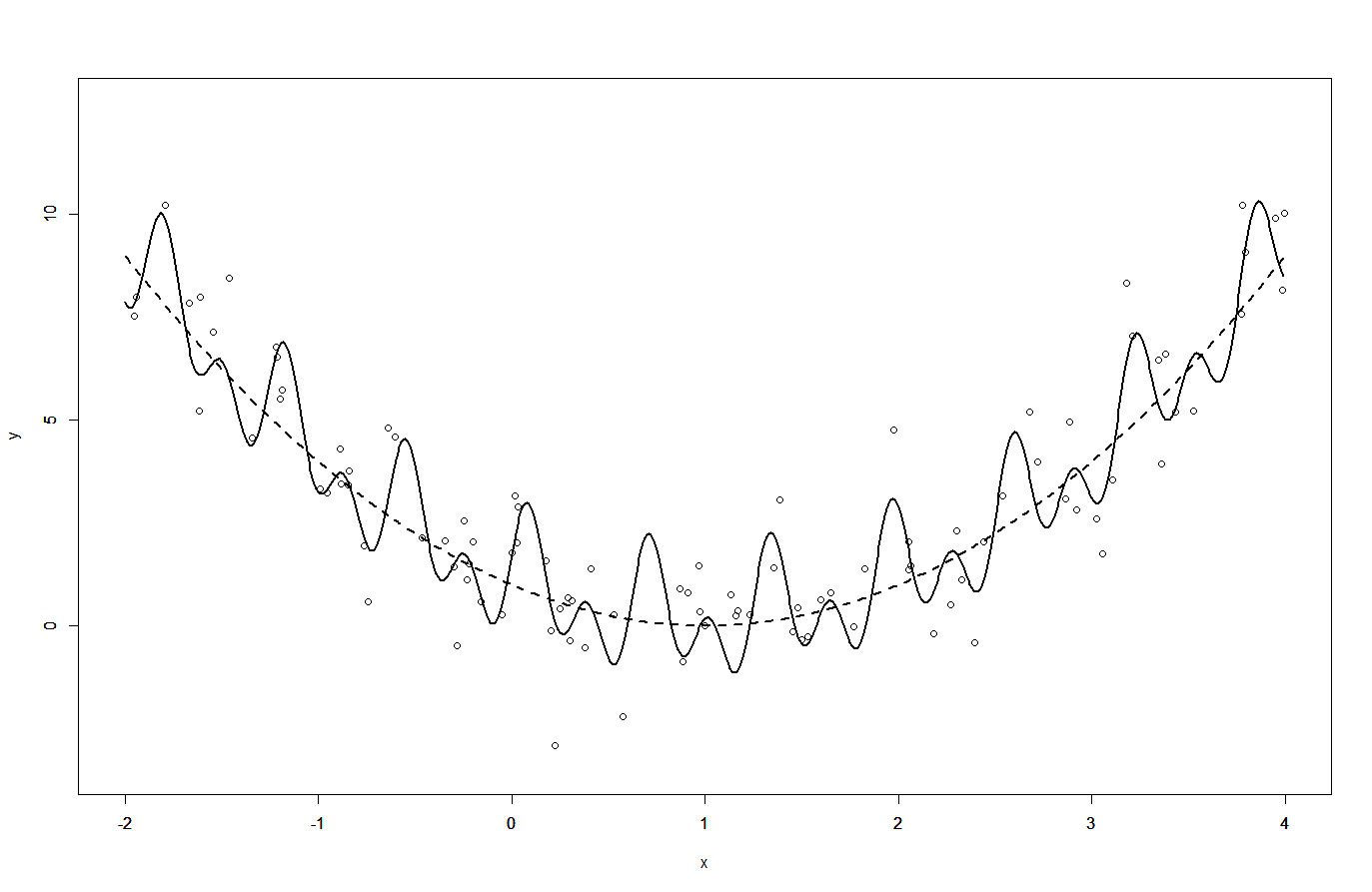}\label{ravine1}
    \end{subfigure}
    \begin{subfigure}[t]{0.48\textwidth}\caption{}
        \includegraphics[width=\textwidth]{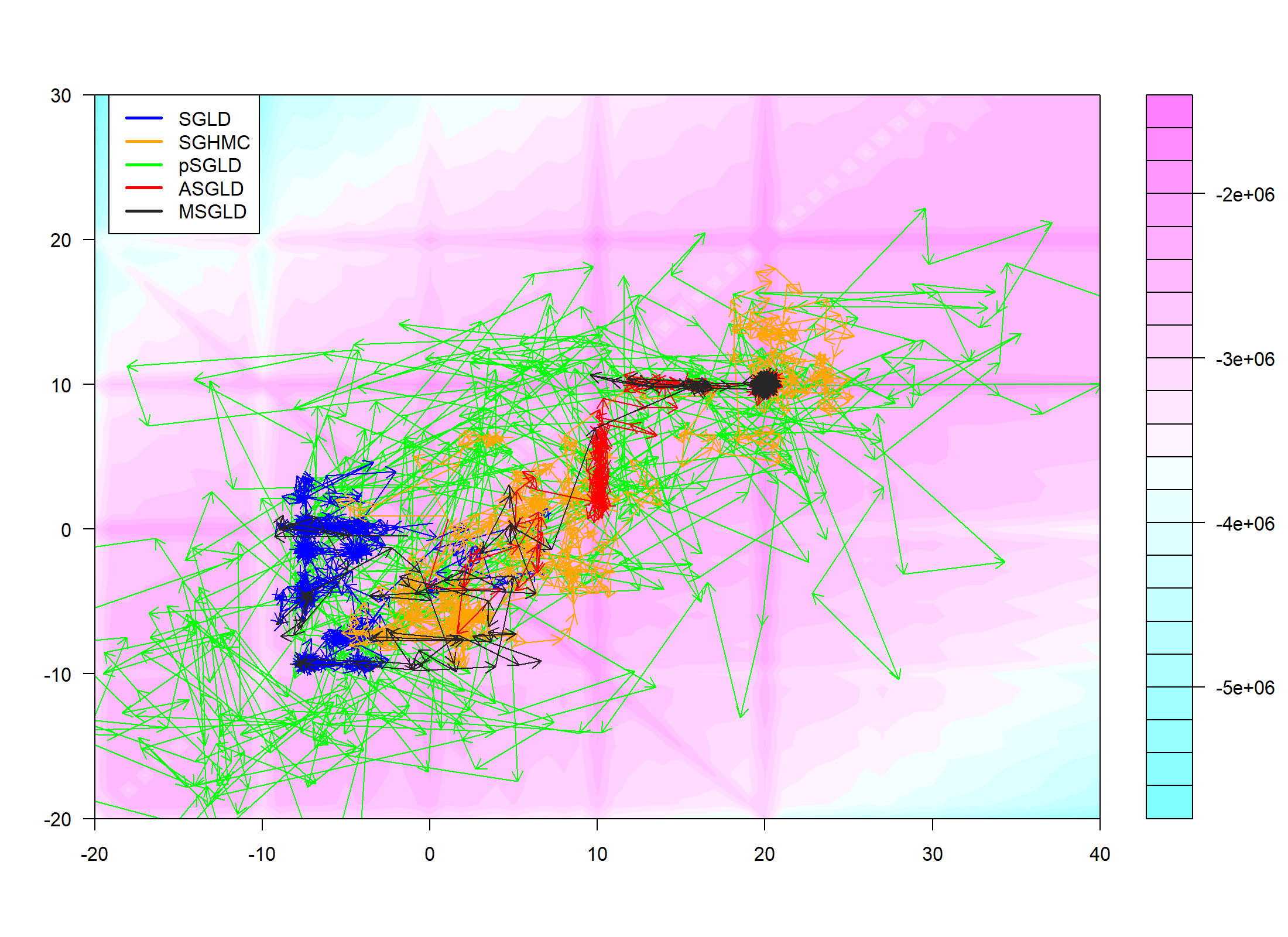}\label{ravine2}
    \end{subfigure}
  \caption{(a) The dashed line is for the global pattern $(x-1)^2$ and the solid line is for the regression function $f_{\theta}(x)$, where $\theta_1=20$ and $\theta_2=10$. The points represent 100 random samples from the regression. (b) Contour plot of the energy function for one dataset and the sample paths produced by SGLD, SGHMC, pSGLD, ASGLD and MSGLD in a run, where the sample paths have been thinned by a factor of 50 for readability of the plot.}\label{Ravine}
\end{figure}

 From this regression, we simulated 5 datasets with $(\theta_1,\theta_2)=(20,10)$ independently. 
Each dataset consists of 10,000 samples. To conduct Bayesian analysis for the problem, we 
set the prior distribution: $\theta_1 \sim N(0,1)$ and $\theta_2 \sim N(0,1)$, which are 
{\it a priori} independent. This choice of the prior distribution makes the problem even 
harder, which discourages the convergence of the posterior simulation to the true value of 
$\theta$. Instead, it encourages to estimate $f_{\theta}(x)$ by the global pattern 
$(x-1)^2$.

Both ASGLD and MSGLD were run for each of the 5 datasets. Each run consisted 
of 30,000 iterations, where the first 10,000 iterations were discarded for the burn-in process
 and the samples generated from the remaining 20,000 iterations were 
 averaged as the Bayesian estimate of $\theta$. In the simulations, we set the subsample 
 size $n=100$. The settings of other parameters are given in the Appendix. 
 Table \ref{distorted} shows the Bayesian estimates of $\theta$ produced by the two algorithms
 in each of the five runs. The MSGLD estimate converged to the true value in all five runs,
 while the ASGLD estimates converged to the true value in four of five runs. 
 For comparison, SGLD, SGHMC and pSGLD were also applied to this 
 example with the settings given in the Appendix. 
 As implied by Table \ref{distorted}, all the three algorithms essentially failed 
 for this example: none of their estimates converged to the true value!  
 
 
 \begin{table}[htbp]
\caption{Bayesian estimates of $\theta$ produced by different algorithms for the 
long narrow energy ravines example in five independent runs, 
where the true value of $\theta$ is (20,10). }
\label{distorted}
\begin{center}
\begin{sc}
\vspace{-0.1in}
\begin{tabular}{lcccccc}
\toprule
Method & $\theta$ & 1 &2 &3 & 4 &5 \\
\midrule
    & $\theta_1$ &-5.79 &-6.59 & 17.43 & 0.99 & -3.01 \\
\raisebox{1.5ex}{SGLD} &  $\theta_2$   &-2.42  &  -1.76 &9.02 & -1.36 &-4.74 \\ \hdashline
   & $\theta_1$ & 9.22 & 14.73 &23.19 & 11.39& -4.22\\
\raisebox{1.5ex}{SGHMC} &  $\theta_2$  & 1.27  & 6.23 & 11.03 & 2.65 & -2.25 \\ \hdashline
 & $\theta_1$ & 1.27 & -19.77 & 16.52 &5.14& -9.08\\
\raisebox{1.5ex}{pSGLD} &  $\theta_2$ & -2.22&  -15.44 & 8.17 & 0.65 & 3.75 \\ \midrule
 & $\theta_1$ & 19.75 & 15.34 & 19.99 & 19.01 &19.98 \\
 \raisebox{1.5ex}{ASGLD} &  $\theta_2$& 9.99 &9.29 & 9.92& 9.66 &9.99\\
\hdashline
   & $\theta_1$ &20.01 &20.07 & 19.99 & 20.00 &19.99  \\
\raisebox{1.5ex}{MSGLD} &  $\theta_2$&9.99 &10.00 &9.99 & 9.99 & 9.99\\
\bottomrule
\end{tabular}
\end{sc}
\end{center}
\vspace{-0.1in}
\end{table}
 
 For a further exploration, Figure \ref{ravine2} shows the contour plot of the energy function 
 for one dataset as well as the sample paths produced by  
 SGLD, SGHMC, pSGLD, ASGLD and MSGLD for the dataset. As shown by the contour plot, 
 the energy landscape contains multiple long narrow ravines, which make the existing 
 SGMCMC algorithms essentially fail. However, due to the use of momentum information, 
 ASGLD and MSGLD work extremely well for this example. As indicated by their  sample paths, 
 they can move along narrow ravines, and converge to the true value of 
 $\theta$ very quickly. It is interesting to point out that pSGLD does not work well for 
 this example, although it has used the past gradients in constructing the 
 preconditioned matrix. A possible reason for this failure is that it only approximates the preconditioned 
 matrix by a diagonal matrix and missed the correlation between different components of $\theta$.

\section{DNN Applications}

\subsection{Landsat Data}

 The dataset is available at the UCI Machine Learning Repository, which 
 consists of 4435 training instances and 2000 testing instances. Each instance consists of 36
 attributes which represent features of   
 the earth surface images taken from a satellite.
 The training instances consist of 6 classes, and the goal of this study is to learn
 a classifier for the earth surface images.

 We modeled this dataset by a fully connected 
 DNN with structure 36-30-30-6 and \emph{Relu} as the activation function. 
 Let $\theta$ denote the vector of all parameters (i.e., connection weights) of the DNN.  
 We let $\theta$ be subject to a Gaussian prior distribution: 
 $\theta \sim N(0,I_d)$, where $d$ is the dimension of $\theta$.  The SGLD, SGHMC, pSGLD,
 ASGLD and MSGLD algorithms were all applied to simulate from the posterior of the DNN.
 Each algorithm was run for 3,000 epochs with the subsample size 
 $n=50$ and a decaying learning rate 
 \begin{equation} \label{decayrate}
 \epsilon_{k}=\epsilon_0 \gamma^{\lfloor k/L\rfloor},
 \end{equation}
 where $k$ indexes epochs, the initial learning rate $\epsilon_0=0.1$, $\gamma=0.5$, 
 the step size $L=300$, and 
 $\lfloor z \rfloor$ denotes the maximum integer less than $z$. For the purpose of 
 optimization, the temperature was set to $\tau=0.01$. The settings for 
 the specific parameters of each algorithm were given in the Appendix. 
 
 Each algorithm was run for five times for the example. 
 In each run, the training and test classification 
 accuracy were calculated by averaging over the last $200$ samples, 
 which were collected from the last 100,000 iterations 
 with a thinning factor of $500$.  For each algorithm, 
 Table \ref{UCI-result} reports the mean classification 
 accuracy, for both training and test, averaged over five runs and its standard deviation.
 The results indicate that MSGLD has 
 significantly outperforms other algorithms in both training and test for this example. While ASGLD performs similarly to pSGLD for this example.

\begin{table}[htbp]
\caption{Training and test classification accuracy produced by different 
SGMCMC algorithms for the Landsat data, where the accuracy and its standard error 
were calculated based on $5$ independent runs.}
\label{UCI-result}
\begin{center}
\begin{sc}
\vspace{-0.1in}
\begin{tabular}{lcc} \toprule
Method & Training  Accuracy & Test Accuracy  \\
\midrule
SGLD    & 93.163$\pm$0.085  & 90.225$\pm$0.153 \\
pSGLD & 93.857$\pm$ 0.125   &  90.712$\pm$0.090\\
SGHMC & 94.015$\pm$0.117    & 90.848$\pm$0.089\\
\midrule
ASGLD & 93.827$\pm$0.087  & 90.794$\pm$0.087\\
MSGLD & {\bf 94.910} $\pm$0.105  & {\bf 91.247} $\pm$0.141 \\
\bottomrule
\end{tabular}
\end{sc}
\end{center}
\end{table}


\begin{figure}[!t]
    \begin{subfigure}[t]{0.48\textwidth}
        \caption{}\label{uci2}\includegraphics[ width=\textwidth]{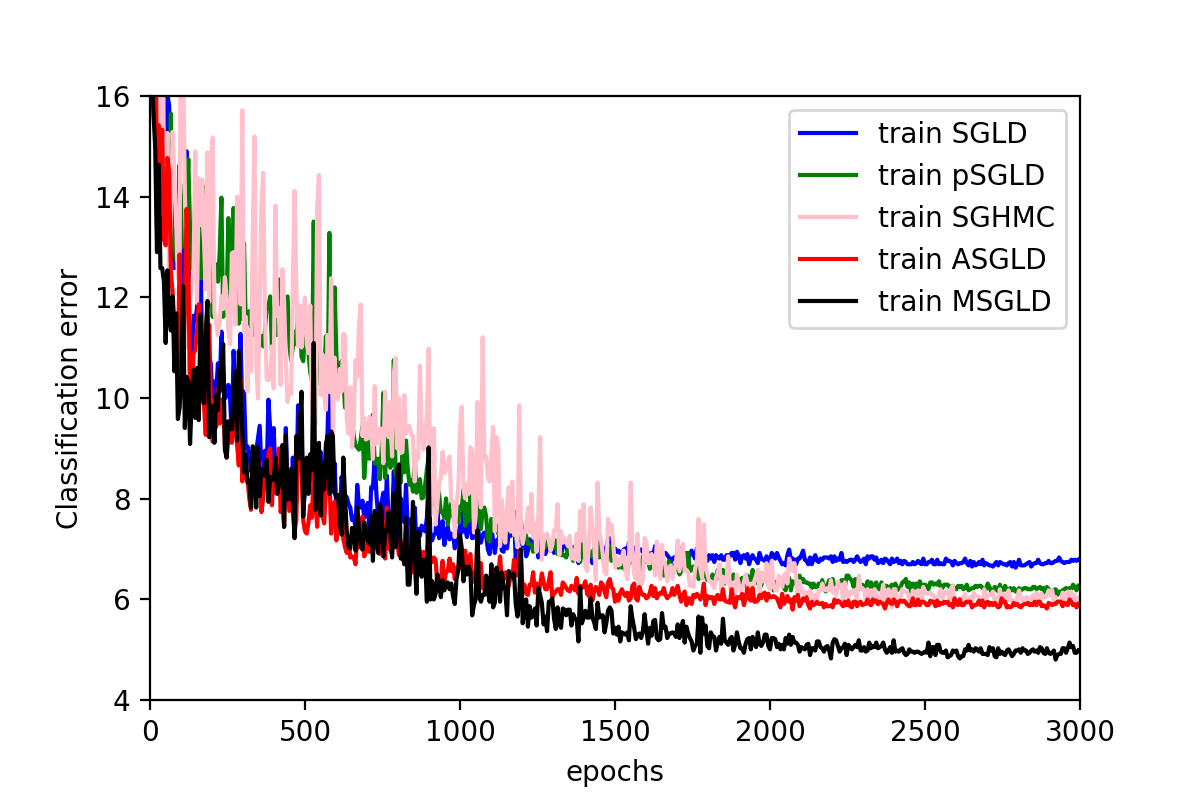}
    \end{subfigure}
    \begin{subfigure}[t]{0.48\textwidth}
        \caption{}\label{uci1}\includegraphics[width=\textwidth]{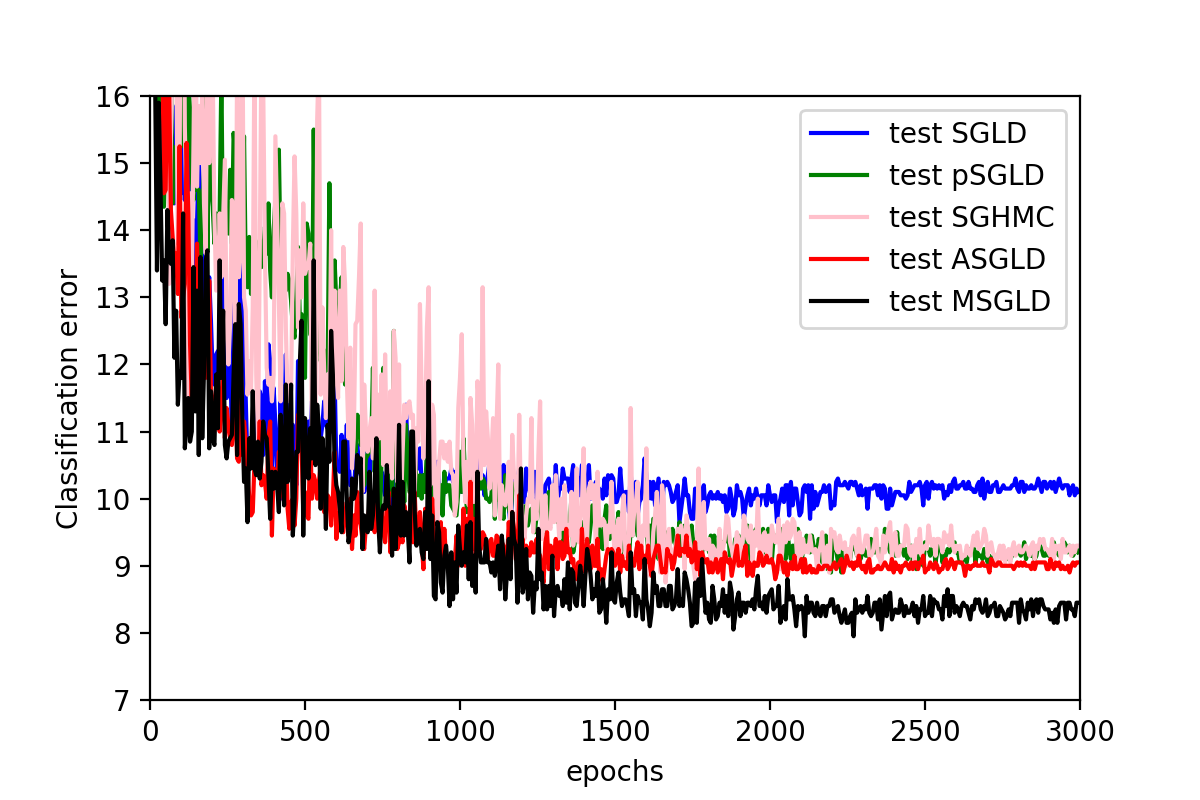}
    \end{subfigure}
    \caption{(a) Training classification errors produced by different SGMCMC algorithms 
for the Landsat data example. (b) Test classification errors produced by different SGMCMC algorithms for the Landsat data example.}\label{uci}
\end{figure}

Finally, we note that for this example, the SGMCMC algorithms have been run excessively long. 
Figure \ref{uci2} and Figure \ref{uci1} show, respectively, the
training and test classification errors produced by SGLD, pSGLD, SGHMC, 
ASGLD and MSGLD along with iterations. It indicates again that MSGLD significantly 
outperforms other algorithms in both training and test.


\subsection{MNIST data}

The MNIST is a benchmark dataset of computer vision, which consists of 60,000 training instances 
and 10,000 test instances. Each instance is an image consisting of $28\times 28$ attributes and 
representing a hand-written number of 0 to 9.   
For this data set, we tested whether ASGLD or MSGLD can be used to train sparse DNNs. 
For this purpose, we considered a mixture Gaussian prior for each of the connection weights:
\begin{equation}\label{sparseprior}
    \pi(\theta_k)\sim \lambda_k N(0,\sigma_{1,k}^2)+(1-\lambda_k)N(0,\sigma_{0,k}^2),
\end{equation}
where $k$ is the index of hidden layers, and $\sigma_{0,k}$ is a relatively very small value
compared to $\sigma_{1,k}$. In this paper, we set $\lambda_k=10^{-7},\ \sigma_{1,k}^2=0.02, \ \sigma_{0,k}^2=1\times 10^{-5}$ for all $k$. 

We trained a DNN with structure 784-800-800-10 using ASGLD and MSGLD for 250 epochs with subsample size $100$. 
For the first $100$ epochs, the DNN was trained as usual, i.e., with a Gaussian 
prior $N(0,I_d)$ imposed on each connection weight.  Then the DNN was trained 
for 150 epochs with the prior (\ref{sparseprior}). The settings for the other parameters 
of the algorithms were given in the Appendix. Each algorithm was run 
for 3 times. In each run, the training and prediction accuracy were calculated 
by averaging, respectively, the fitting and prediction results over the iterations 
of the last 5 epochs.
 The numerical results were summarized in Table \ref{sparsetab}, where ``Sparse ratio'' is calculated as the percentage of the learned connection weights satisfying the inequality
$|\theta_k|\geq \sqrt{\log(\frac{1-\lambda_k}{\lambda_k}\frac{\sigma_{1k}}{\sigma_{0k}})\frac{2\sigma_{0k}^2\sigma_{1k}^2}{\sigma_{1k}^2-\sigma_{0k}^2} }$. The threshold is determined by solving the probability inequality $P\{\theta_k \sim N(0,\sigma_{0,k}^2)|\theta_k\} \leq 
 P\{\theta_k \sim N(0,\sigma_{1,k}^2)|\theta_k\}$.

\begin{table}[htbp]
\caption{Comparison of different algorithms for training sparse DNNs for the MNIST data, where 
the reported results for each algorithm were averaged based on three independent runs. }
\label{sparsetab}
\begin{center}
\begin{sc}
\begin{tabular}{lccc}
\toprule
Method & Training Accuracy & Test Accuracy & Sparsity Ratio \\
\midrule
ASGLD &{\bf 99.542}$\pm$ 0.026 &{\bf 98.417}$\pm$0.044 & 3.176$\pm$0.155\\
MSGLD &99.494$\pm$ 0.002 &98.319 $\pm$0.019 &3.369$\pm$0.063\\
ADAM &99.383$\pm$0.072& 98.332$\pm$0.003& 2.169$\pm$0.013\\
\bottomrule
\end{tabular}
\end{sc}
\end{center}
\vskip -0.1in
\end{table}

Adam is a well known DNN optimization method for MNIST data. For comparison, it was also applied to 
train the sparse DNN. Interestingly, ASGLD outperforms Adam in both 
training and prediction accuracy, although its resulting network is a little more dense 
than that by Adam.

\subsection{CIFAR 10 and CIFAR 100}

The CIFAR-10 and CIFAR-100 are also benchmark datasets of computer vision. CIFAR-10 consists of 50,000 training images and 10,000 test images, 
and the images were classified into 10 classes. 
The CIFAR-100 dataset consists of 50,000 training images and 10,000 test images,  but 
 the images were classified into 100 classes. 
We modeled both datasets using a ResNet-18\cite{resnet}, and trained the model for 250 epochs using various optimization and SGMCMC algorithms, including SGD, Adam, SGLD, 
SGHMC, pSGLD, ASGLD and MSGLD with data augmentation techniques. 
The temperature $\tau$ was set to $1.0e-5$ for all SGMCMC methods. 
The subsample sample size was set to 100. 
For CIFAR-10, the learning rate was set as in (\ref{decayrate}) with $L=40$ and $\gamma=0.5$;  and the weight prior was set to $N(0,\frac{1}{25}I_d)$ for all SGMCMC 
algorithms. For CIFAR-100, the learning rate was set as in (\ref{decayrate}) with $L=90$ and $\gamma=0.1$; and the weight prior was set to $N(0,\frac{1}{25}I_d)$ for SGLD, pSGLD, and SGHMC, and \textcolor{black}{$N(0,\frac{1}{75}I_d)$ for ASGLD and MSGLD.}  For SGD and Adam, 
 we set the objective function as 
\begin{equation} \label{objeq3}
-\frac{1}{n} \sum_{i=1}^n \log f(x_i|\theta)+ \frac{1}{2} \lambda \|\theta\|^2,
\end{equation}
where $f(x_i|\theta)$ denotes the likelihood function of observation $i$, and $\lambda$ is the 
regularization parameter. For SGD we set $\lambda=5.0e-4$; and for Adam we set $\lambda=0$. For Adam, we have also tried the case $\lambda\ne 0$, but the results 
were inferior to those reported below.

For each dataset, each algorithm was run for 5 times. In each run, the training and test classification 
errors were calculated by averaging over the iterations of the last 5 epochs.   
Table \ref{cifar-result} reported the mean training and test classification accuracy (averaged over 
5 runs) of the Bayesian ResNet-18 for CIFAR-10 and CIFAR-100. 
The comparison indicates that 
MSGLD outperforms all other algorithms in test accuracy for the two datasets. In terms of training accuracy,
ASGLD and MSGLD work more like the existing momentum-based algorithms such as ADAM and SGHMC, but tend to generalize better than them. 
This result has been very remarkable,
as all algorithms were run with a small number of epochs; that is, 
the Monte Carlo algorithms are not necessarily slower than the 
stochastic optimization algorithms in deep learning. 
 
\begin{table}[htbp]
\caption{Mean training and test classification accuracy (averaged over 5 runs) of the Bayesian ResNet-18 
for CIFAR-10 and CIFAR-100 data}
\label{cifar-result}
\begin{center}
\begin{sc}
\begin{tabular}{lccccc}
\toprule
 & \multicolumn{2}{c}{CIFAR-10} &  & 
  \multicolumn{2}{c}{CIFAR-100}  \\  \cline{2-3} \cline{5-6}  
 Method  & Training & Test & &  Training & Test \\ \midrule 
SGD   & 99.721$\pm$0.058  &92.896$\pm$0.141 & &   
  98.211$\pm$0.208 & 72.274$\pm$0.145\\
ADAM  & 99.515$\pm$0.087 &  92.144$\pm$0.154& &  
96.744$\pm$0.390 & 67.564$\pm$0.501\\
\midrule
SGLD  & 99.713$\pm$0.034 &  92.908$\pm$0.114  & &   
97.650$\pm$0.131 & 71.908$\pm$0.149\\
pSGLD & 97.209$\pm$0.465 & 89.398$\pm$0.379& & 
 99.034$\pm$0.101 & 69.234$\pm$0.162\\
SGHMC &  99.152$\pm$0.060 & 93.470$\pm$0.073 & &  
96.92$\pm$0.091 & 73.112$\pm$0.139 \\
\midrule
ASGLD &  99.738$\pm$0.052 &  93.018$\pm$0.187 & &  
96.938$\pm$0.268 &  73.252$\pm$0.681\\
MSGLD & 99.702$\pm$0.066 & {\bf 93.512}$\pm$0.081 & & 
96.801$\pm$0.150 & {\bf 73.670}$\pm$0.144\\ \bottomrule
\end{tabular}
\end{sc}
\end{center}
\end{table}

\section{Conclusion} \label{ASGLDconc}

This paper has proposed a class of adaptive SGMCMC algorithms by including a bias term to the
drift of SGLD, where the bias term is allowed to be adaptively adjusted 
with past samples, past gradients, etc. The proposed algorithms have extended the framework 
of the existing SGMCMC algorithms to an adaptive one
in a flavor of extending the 
Metropolis algorithm \cite{Metropolis1953} to the adaptive Metropolis algorithm \citep{Haario2001} in the history of MCMC. 
The numerical results indicate that the proposed algorithms have 
inherited many attractive properties, such as quick convergence in the scenarios with   
long narrow ravines or saddle points, 
from their counterpart optimization algorithms, 
while ensuring more extensive exploration of the sample space than 
the optimization 
algorithms due to their simulation nature. As a result, the proposed algorithms can 
significantly perform the existing optimization and SGMCMC algorithms in both simulation 
and optimization. 

For the adaptive SGMCMC algorithms, different bias terms represent different strengths 
in escaping from local traps, saddle points, or long narrow ravines. In the future, we 
will consider to develop an adaptive SGMCMC algorithm with a complex bias term which has 
incorporated all the strengths in escaping from local traps, saddle points, long narrow 
ravines, etc. We will also consider to incorporate other advanced SGMCMC techniques 
into adaptive SGMCMC algorithms to further improve their performance. 
For example, cyclical SGMCMC \citep{csgmcmc} proposed a cyclical step size schedule, 
where larger steps  discover new modes,  and smaller steps characterize each mode. 
This technique can be easily incorporated into adaptive SGLD algorithms to improve 
their convergence to the stationary distribution.

\appendix

\section*{\bf Appendix}

\section{Proofs of Theorems \ref{them:1} and \ref{them:2}}

Considered a generalized SGLD algorithm with a biased drift term for simulating from the target distribution $\pi_*(\theta) \propto \exp\{-U(\theta)\}$.  Let $\theta_{k+1}$ and $\theta_k$ be two random vectors in $\Theta$ satisfying 
\begin{equation}\label{algeq1}
 \theta_{k+1} = \theta_{k}-\epsilon [\nabla U(\theta_{k})+\zeta_{k+1}]+\sqrt{2\epsilon}e_{k+1},
\end{equation}
where $e_{k+1} \sim N(0,I_p)$, and $\zeta_{k+1}=\nabla \widehat{U}(\theta_k)-\nabla U(\theta_k)$ denotes deviation between the drift 
$\nabla \widehat{U}(\theta_k)$ used in simulations and the ideal drift $\nabla U(\theta_k)=-\nabla \log\pi_*(\theta_k)$.  
For example, in equation (\ref{Adapteq}), we have 
$\nabla \widehat{U}(\theta_k)=\nabla_{\theta} \widetilde{U}(\theta_k) + a A_k$. 

 
  
 For the generalized SGLD algorithm (\ref{algeq1}), we aim to analyze the deviation of the averaging estimate $\hat{\phi}=\frac{1}{L}\sum_{k=1}^L \phi(\theta_k)$
  from the posterior mean $\bar{\phi}=\int_{\Theta} \phi(\theta) \pi_*(d\theta)$ for a bounded smooth function $\phi(\theta)$ of interest.
The key tool we employed in the analysis is the Poisson equation 
which is used to characterize the fluctuation 
between $\phi$ and $\bar \phi$: 
\begin{equation} \label{poissoneqf}
    \mathcal{L}g(\theta)=\phi(\theta)-\bar \phi,
\end{equation}
where $g(\theta)$ is the solution to the Poisson equation, and $\mathcal{L}$ is the infinitesimal generator of the Langevin diffusion 
\begin{equation*}
    \mathcal{L}g:=\langle\nabla g, \nabla U(\cdot)\rangle+\tau\Delta g.
\end{equation*}
By imposing the following regularity conditions on the function $g(\theta)$, we can control the fluctuation of $\frac{1}{L}\sum_{k=1}^L \phi(\theta_k)-\bar \phi$, which enables convergence of the sample average.

\begin{itemize}
\item[(A.1)] Given a sufficiently smooth function $g(\theta)$ as defined in (\ref{poissoneqf}) and a function $\mathcal{V}(\theta)$ such that the derivatives satisfy the inequality $\|D^j g\|\lesssim \mathcal{V}^{p_j}(\theta)$ for some constant
 $p_j>0$, where $j\in\{0,1,2,3\}$. In addition, $\mathcal{V}^p$ has a bounded expectation, i.e., $\sup_{k} E[\mathcal{V}^p(\theta_k)]<\infty$;
 and $\mathcal{V}^p$ is smooth, i.e. $\sup_{s\in (0, 1)} \mathcal{V}^p(s\theta+(1-s)\vartheta)\lesssim \mathcal{V}^p(\theta)+\mathcal{V}^p(\vartheta)$ for all $\theta,\vartheta\in\Theta$ and $p\leq 2\max_j\{p_j\}$.
\end{itemize}

For a stronger but verifiable version of the condition, we refer readers to \cite{VollmerZW2016}. In what follows, we present a lemma which is adapted from Theorem 3 of \cite{ChenNL15} with a fixed learning rate $\epsilon$.
\textcolor{black}{Note that \cite{ChenNL15} requires $\{\zeta_k:k=1,2,\ldots\}$ to be a zero mean sequence, while in our case  $\{\zeta_k: k=1,2,\ldots\}$ forms an auto-regressive sequence which makes the proof of \cite{ChenNL15} still go through.}
A similar lemma can also be established for a decaying learning rate sequence. 
Refer to Theorem 5 of \cite{ChenNL15} for the detail.  

\begin{lemma} \label{lemmanew1} 
Assume the condition (A.1) hold. 
For a smooth function $\phi$, the mean square error (MSE) of the generalized SGLD algorithm (\ref{algeq1}) at time $T = \epsilon L$ is bounded as 
\begin{equation} \label{MSEeq}
E\|\hat{\phi}-\bar{\phi}\|^2 \lesssim  
\frac{1}{L^2} \sum_{k=1}^L E\|\zeta_k\|^2 +\frac{1}{L\epsilon}+\epsilon^2. 
\end{equation}
\end{lemma}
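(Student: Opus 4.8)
The plan is to follow the Poisson-equation technique of \cite{ChenNL15}, tracking the single place where the zero-mean property of $\{\zeta_k\}$ enters and showing that the auto-regressive structure only affects that step. Let $g$ be the solution of (\ref{poissoneqf}); then $\hat\phi-\bar\phi=\frac1L\sum_{k=1}^L\big(\phi(\theta_k)-\bar\phi\big)=\frac1L\sum_{k=1}^L\mathcal{L}g(\theta_k)$, so it suffices to control $\frac1L\sum_k\mathcal{L}g(\theta_k)$ in mean square. First I would Taylor-expand $g$ along one step of (\ref{algeq1}): with $\delta_k=\theta_{k+1}-\theta_k=-\epsilon[\nabla U(\theta_k)+\zeta_{k+1}]+\sqrt{2\epsilon}\,e_{k+1}$, expanding $g(\theta_{k+1})$ to third order and observing that the drift part of the first-order term together with the second-order term in the Gaussian increment reproduces $\epsilon\,\mathcal{L}g(\theta_k)$ in conditional expectation, one obtains the per-step identity
\[
g(\theta_{k+1})-g(\theta_k)=\epsilon\,\mathcal{L}g(\theta_k)-\epsilon\,\langle\nabla g(\theta_k),\zeta_{k+1}\rangle+M_{k+1}+R_k ,
\]
where $M_{k+1}$ collects the mean-zero Gaussian-driven terms (the term $\sqrt{2\epsilon}\,\langle\nabla g(\theta_k),e_{k+1}\rangle$ and the fluctuation of $\tfrac12\delta_k^\top\nabla^2g(\theta_k)\delta_k$ about its conditional mean), and $\|R_k\|\lesssim\epsilon^2\mathcal{V}^p(\theta_k)$ collects the $O(\epsilon^2)$ remainder of the quadratic term and the third-order Taylor term (which is $O(\epsilon^{3/2})$ and conditionally mean zero by symmetry of the Gaussian noise). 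Condition (A.1) is exactly what makes $\|\nabla g\|,\|\nabla^2g\|,\|\nabla^3g\|$ dominated by powers of $\mathcal{V}$ with bounded expectation along the chain, so each of these terms is admissible.

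Summing the per-step identity over $k=1,\dots,L$, using that $\sum_k[g(\theta_{k+1})-g(\theta_k)]$ telescopes, dividing by $\epsilon L$, and substituting $\mathcal{L}g(\theta_k)=\phi(\theta_k)-\bar\phi$ gives
\[
\hat\phi-\bar\phi=\frac{g(\theta_{L+1})-g(\theta_1)}{\epsilon L}+\frac1L\sum_{k=1}^L\langle\nabla g(\theta_k),\zeta_{k+1}\rangle-\frac1{\epsilon L}\sum_{k=1}^L M_{k+1}-\frac1{\epsilon L}\sum_{k=1}^L R_k .
\]
Squaring, using $\|a+b+c+d\|^2\le4(\|a\|^2+\|b\|^2+\|c\|^2+\|d\|^2)$, and taking expectations, I would bound the four pieces separately. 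By (A.1), $E\|g(\theta_{L+1})-g(\theta_1)\|^2\lesssim1$, so the boundary term is $\lesssim(L\epsilon)^{-2}\lesssim(L\epsilon)^{-1}$ for $L\epsilon\ge1$. Since $\{M_{k+1}\}$ is a martingale-difference sequence with $E\|M_{k+1}\|^2\lesssim\epsilon$, orthogonality gives $E\big\|\tfrac1{\epsilon L}\sum_kM_{k+1}\big\|^2=\tfrac1{(\epsilon L)^2}\sum_kE\|M_{k+1}\|^2\lesssim(L\epsilon)^{-1}$. For the remainder, Jensen and $\sup_kE[\mathcal{V}^p(\theta_k)]<\infty$ give $E\big\|\tfrac1{\epsilon L}\sum_kR_k\big\|^2\lesssim\big(\tfrac1{\epsilon L}\cdot L\epsilon^2\big)^2=\epsilon^2$. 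These account for the last two terms of (\ref{MSEeq}).

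The only delicate piece is $\frac1L\sum_{k=1}^L\langle\nabla g(\theta_k),\zeta_{k+1}\rangle$, which is precisely where \cite{ChenNL15} invokes $E[\zeta_{k+1}\mid\mathcal{F}_k]=0$. I would split $\zeta_{k+1}=\big(\zeta_{k+1}-E[\zeta_{k+1}\mid\mathcal{F}_k]\big)+E[\zeta_{k+1}\mid\mathcal{F}_k]$. The first, martingale-difference, part is handled verbatim as in \cite{ChenNL15}: orthogonality of the increments gives $E\big\|\tfrac1L\sum_k\langle\nabla g(\theta_k),\zeta_{k+1}-E[\zeta_{k+1}\mid\mathcal{F}_k]\rangle\big\|^2\lesssim\tfrac1{L^2}\sum_kE\|\zeta_{k+1}\|^2$, the first term of (\ref{MSEeq}). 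For the predictable part, note that for the adaptive schemes of Section \ref{ASGLDsect} the mini-batch gradient is conditionally unbiased, so $E[\zeta_{k+1}\mid\mathcal{F}_k]=aA_k$ with $A_k$ obeying the exponentially weighted recursion $m_k=\beta_1m_{k-1}+(1-\beta_1)\nabla\widetilde U(\theta_{k-1})$ (divided by $\sqrt{V_k+\lambda\mathbf{1}}$ for ASGLD). I would then use summation by parts: eliminating $m_{k-1}$ through $m_k-m_{k-1}=(1-\beta_1)\big(\nabla\widetilde U(\theta_{k-1})-m_{k-1}\big)$ and reindexing, $\tfrac1L\sum_k\langle\nabla g(\theta_k),aA_k\rangle$ collapses to an $O(\epsilon)$-order remainder plus a sum of martingale differences built from the one-step increments $\nabla g(\theta_{k+1})-\nabla g(\theta_k)$, whose leading part $\sqrt{2\epsilon}\,\nabla^2g(\theta_k)e_{k+1}$ is $O(\sqrt\epsilon)$ and conditionally mean zero; the factor $(1-\beta_1)^{-1}$ is absorbed into the implicit constant, and the regularity and boundedness encoded in conditions (A.2)--(A.5) keep the resulting terms controlled, so the predictable part is dominated by the $(L\epsilon)^{-1}+\epsilon^2$ terms already present. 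Combining the four bounds yields (\ref{MSEeq}). I expect this last step --- disposing of the predictable, auto-regressive component $aA_k$ without a zero-mean cancellation --- to be the main obstacle, since it is the only place the argument genuinely departs from \cite{ChenNL15} and the only place where the structure of the adaptive drift and conditions (A.2)--(A.5) are essential.
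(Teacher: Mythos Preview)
The paper does not actually prove this lemma: it is stated as ``adapted from Theorem~3 of \cite{ChenNL15}'' together with the one-sentence assertion that the auto-regressive structure of $\{\zeta_k\}$ ``makes the proof of \cite{ChenNL15} still go through.'' No details are given. Your proposal is therefore considerably more detailed than anything in the paper, and your overall strategy---Poisson equation, third-order Taylor expansion of $g$ along one step, telescoping, and the four-term decomposition into boundary/martingale/remainder/$\zeta$-terms---is precisely the Chen--Ding--Carin argument the paper is invoking. Your bounds on the boundary term, the Gaussian martingale term $M_{k+1}$, and the $O(\epsilon^2)$ remainder $R_k$ are correct and standard under (A.1). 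You are also right that the entire difficulty is concentrated in the predictable component $E[\zeta_{k+1}\mid\mathcal F_k]=aA_k$, and you correctly flag this as the main obstacle.

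Where your proposal does not close is exactly there. The summation-by-parts step you describe does not deliver the claimed orders. Abel summation on $\sum_k\langle\nabla g(\theta_k),m_k\rangle$ replaces $m_k$ by partial sums $S_k=\sum_{j\le k}m_j$, but $\|S_k\|$ grows like $k$ (each $m_j$ is an $O(1)$ weighted average of past gradients), while $\|\nabla g(\theta_{k+1})-\nabla g(\theta_k)\|=O(\sqrt\epsilon)$; the resulting sum is $O(L^2\sqrt\epsilon)$, not $O(L\epsilon)$, so dividing by $L$ leaves an $O(L\sqrt\epsilon)$ term rather than an $O(\epsilon)$ remainder. Alternatively, if you unwind the recursion $m_k=\beta_1 m_{k-1}+(1-\beta_1)\nabla\widetilde U(\theta_{k-1})$ to write $m_k$ as a geometric average of recent $\nabla\widetilde U(\theta_{k-1-j})$, the stochastic parts $\xi_{k-j}$ do form martingale differences, but the deterministic parts $\nabla U(\theta_{k-1-j})$ contribute a non-vanishing sum $\sum_k\langle\nabla g(\theta_k),\nabla U(\theta_{k-1})\rangle$, each summand being $O(1)$ under (A.1); nothing in your sketch forces this to be $o(L)$. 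Invoking (A.2)--(A.5) for ``regularity and boundedness'' does not help here, since those conditions bound moments but do not create cancellation in a sum of $L$ order-one terms. In short, the step where you say the predictable part ``collapses to an $O(\epsilon)$-order remainder plus a sum of martingale differences'' is asserted rather than shown, and a direct calculation suggests it does not collapse in that way. This is, to be fair, precisely the gap the paper itself papers over with its one-line appeal to the AR structure; your proposal at least isolates it.
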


To prove Theorems \ref{them:1} and \ref{them:2}, we further make the following assumptions: 
 
 \begin{itemize} 
 \item[(A.2)] (smoothness) $U(\theta)$ is $M$-smooth; that is, there exists a constant $M>0$ such that for any $\theta, \theta'\in \Theta$,
\begin{equation} \label{ass_2_1_eq}
\|\nabla  U(\theta)-\nabla U(\theta')\|  \leq M\|\theta-\theta'\|. \\
\end{equation}
\end{itemize}

The smoothness of $\nabla U(\theta)$ is a standard assumption in studying the convergence of SGLD, and it has been used in a few work, see e.g.  \citet{Maxim17} and \citet{Xu18}.

\begin{itemize}
    \item[(A.3)] (Dissipativity) There exist constants $m>0$ and $b\geq 0$ such that for any $\theta \in \Theta$, 
\begin{equation} 
\label{eq:01}
\langle \nabla U(\theta), \theta\rangle\geq m \|\theta\|^2 -b.
\end{equation}
\end{itemize}

This assumption has been widely used in proving the geometric ergodicity of dynamical systems \citep{mattingly02, Maxim17, Xu18}. It ensures the sampler to move towards the origin regardless the position of the current point.

\begin{itemize} 
\item[(A.4)] (Gradient noise)  The stochastic gradient $\xi(\theta)=\nabla \widetilde{U}(\theta)-\nabla U(\theta)$ is unbiased; that is, for any $\theta \in \Theta$,
$E[\xi(\theta)]=0$.
In addition, there exists some constant $B>0$ such that the second moment of the stochastic gradient is bounded by $E \|\xi(\theta)\|^2 \leq M^2 \|\theta\|^2+B^2$, 
where the expectation is taken with respect  to the distribution of the gradient noise. 
\end{itemize}

\begin{lemma} (Uniform $L^2$ bound) \label{lemmanew2}
Assume the conditions (A.2)-(A.4) hold. 
For any $0 < \epsilon < Re(\frac{m-\sqrt{m^2-4M^2(M^2+1)}}{4M^2(M^2+1)})$,  there exists a constant $G>0$ such that 
$E\|\theta_k\|^2 \leq G$, where 
$G=\|\theta_0\|^2+\frac{1}{m}(b+2\epsilon B^2(M^2+1)+p)$, $p$ denotes the dimension of $\theta$,
and $Re(\cdot)$ denotes the real part of a complex number.
\end{lemma}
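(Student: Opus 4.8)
The plan is the standard Lyapunov (drift) argument for Langevin-type recursions: derive a one-step inequality of the form $E\|\theta_{k+1}\|^2\le\rho\,E\|\theta_k\|^2+c$ with constants $\rho\in(0,1)$ and $c>0$ (both depending on $\epsilon$), and then iterate it to obtain a bound $E\|\theta_k\|^2\le\|\theta_0\|^2+c/(1-\rho)=:G$ uniform in $k$. I would work with the recursion (\ref{algeq1}) in the unbiased case $\zeta_{k+1}=\xi(\theta_k)=\nabla\widetilde U(\theta_k)-\nabla U(\theta_k)$, i.e. $\theta_{k+1}=\theta_k-\epsilon\nabla\widetilde U(\theta_k)+\sqrt{2\epsilon}\,e_{k+1}$; this is the setting in which (A.2)--(A.4) are phrased, and the adaptive-bias term is incorporated separately afterwards.

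First I would expand
\[
\|\theta_{k+1}\|^2=\|\theta_k-\epsilon\nabla\widetilde U(\theta_k)\|^2+2\sqrt{2\epsilon}\,\langle\theta_k-\epsilon\nabla\widetilde U(\theta_k),\,e_{k+1}\rangle+2\epsilon\|e_{k+1}\|^2 ,
\]
and average out $e_{k+1}$ (which is independent of $\theta_k$ and of the mini-batch, has mean zero, and satisfies $E\|e_{k+1}\|^2=p$): the cross term vanishes and the last term contributes $2\epsilon p$. Averaging out the mini-batch given $\theta_k$ next, the unbiasedness in (A.4) replaces $\langle\theta_k,\nabla\widetilde U(\theta_k)\rangle$ by $\langle\theta_k,\nabla U(\theta_k)\rangle$; to this I apply dissipativity (A.3), $-2\epsilon\langle\theta_k,\nabla U(\theta_k)\rangle\le -2m\epsilon\|\theta_k\|^2+2\epsilon b$. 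For the term $\epsilon^2 E[\|\nabla\widetilde U(\theta_k)\|^2\mid\theta_k]$ I would write $\nabla\widetilde U=\nabla U+\xi$, use unbiasedness once more to drop the $\nabla U$--$\xi$ cross term, then bound $\|\nabla U(\theta_k)\|\le M\|\theta_k\|$ by the $M$-smoothness in (A.2) (taking $\nabla U(0)=0$ without loss of generality, or carrying $\|\nabla U(0)\|$ as an extra constant) and $E[\|\xi(\theta_k)\|^2\mid\theta_k]\le M^2\|\theta_k\|^2+B^2$ by (A.4), which after a careful (non-wasteful) combination yields $4M^2(M^2+1)\epsilon^2\|\theta_k\|^2$ plus a constant-order term $C\epsilon^2$. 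Collecting everything and taking total expectations gives
\[
E\|\theta_{k+1}\|^2\le\bigl(1-2m\epsilon+4M^2(M^2+1)\epsilon^2\bigr)E\|\theta_k\|^2+2\epsilon b+C\epsilon^2+2\epsilon p .
\]

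It then remains to check that the stated range of $\epsilon$ forces $\rho:=1-2m\epsilon+4M^2(M^2+1)\epsilon^2\in(0,1)$ with $1-\rho$ bounded below by a constant multiple of $\epsilon$. Note $\rho$ equals the quadratic $q(x)=4M^2(M^2+1)x^2-2mx+1$ at $x=\epsilon$. When $m^2\ge 4M^2(M^2+1)$, the upper limit in the lemma is precisely $q$'s smaller (real) root, so $\rho>0$ on the admissible range, and since that root is below $m/(2M^2(M^2+1))$ one also gets $\rho<1$. When the discriminant is negative, $Re(\cdot)$ evaluates to $m/(4M^2(M^2+1))$, and on $(0,m/(4M^2(M^2+1)))$ the convex quadratic $q$ is decreasing and stays in $\bigl(1-m^2/(4M^2(M^2+1)),\,1\bigr)\subset(0,1)$. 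In either regime $4M^2(M^2+1)\epsilon\le m$, so $1-\rho=\epsilon\bigl(2m-4M^2(M^2+1)\epsilon\bigr)\ge m\epsilon$. Iterating the affine recursion by induction on $k$ then gives $E\|\theta_k\|^2\le\|\theta_0\|^2+c/(1-\rho)\le\|\theta_0\|^2+c/(m\epsilon)$, a finite constant independent of $k$; substituting the explicit constants obtained from (A.2)--(A.4) into $c$ and $1-\rho$ produces the expression for $G$ stated in the lemma.

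The step I expect to be the main obstacle is pinning down the exact coefficient $4M^2(M^2+1)$ in $\rho$ (and, correspondingly, the precise form of $G$): this needs the non-wasteful estimation of $\epsilon^2 E\|\nabla\widetilde U(\theta_k)\|^2$ — in particular using the exact identity $E\|\nabla U+\xi\|^2=\|\nabla U\|^2+E\|\xi\|^2$ under unbiasedness rather than a crude $\|a+b\|^2\le 2\|a\|^2+2\|b\|^2$ split — together with the case analysis verifying that $Re(\cdot)$ in the admissible range of $\epsilon$ behaves correctly in both the real- and complex-discriminant regimes. The probabilistic part (expanding the square, using independence and zero mean of $e_{k+1}$ and unbiasedness of $\xi$) is routine, as long as one conditions in the right order so that $\theta_k$, and hence the bounds from (A.2)--(A.3), can be treated as fixed while $e_{k+1}$ and $\xi(\theta_k)$ are averaged out.
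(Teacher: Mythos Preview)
Your Lyapunov/drift argument---expand $\|\theta_{k+1}\|^2$, average out the Gaussian noise, apply dissipativity and the moment bound on $\xi$, obtain an affine recursion $E\|\theta_{k+1}\|^2\le\rho\,E\|\theta_k\|^2+c$ and iterate---is precisely the mechanism behind the paper's proof, which does not spell any of this out but simply invokes Lemma~1 of \cite{DengL2019} (where exactly this computation is carried out) and checks that its hypotheses transfer. In that sense your proposal is correct and follows the same route, with the details written out rather than cited.

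The one substantive discrepancy is scope. You explicitly restrict to the unbiased case $\zeta_{k+1}=\xi(\theta_k)$ and say the adaptive bias is ``incorporated separately afterwards.'' But the lemma, as the paper actually uses it in the proofs of Theorems~\ref{them:1} and~\ref{them:2}, must hold for the \emph{full} MSGLD/ASGLD iterates (with $\zeta_{k+1}=a m_k+\xi_{k+1}$ or its ASGLD analogue): the uniform $L^2$ bound on those iterates is what yields (\ref{pathbound})--(\ref{pathbound2}), which are then used to control $E\|\zeta_{k+1}\|^2$. There is no independent ``afterwards.'' The paper handles this by rewriting the biased drift as $\nabla_\theta L(\theta_k,\zeta_{k+1})=\nabla U(\theta_k)+\zeta_{k+1}$ and remarking that the key dissipativity inequality $E\langle\nabla_\theta L(\theta_k,\zeta_{k+1}),\theta_k\rangle\ge mE\|\theta_k\|^2-b$ still holds provided $\epsilon$ is small or $\beta_1$ is not too close to~$1$, so that the cited argument carries over verbatim. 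As written, your argument establishes the bound only for plain SGLD; to match the lemma in the generality the paper needs, you would have to add the analogous verification that dissipativity survives the adaptive bias term.
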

\begin{proof} The proof follows that of Lemma 1 in
\cite{DengL2019}. To make use of that proof, we can rewrite equation (\ref{algeq1}) as 
\[
\theta_{k+1} = \theta_{k}-\epsilon \nabla_{\theta} L(\theta_{k},\zeta_{k+1})+\sqrt{2\epsilon}e_{k+1},
\]
where $\nabla_{\theta} L(\theta_k,\zeta_{k+1})=[\nabla_{\theta} U(\theta_k)+\zeta_{k+1}]$ by viewing $\zeta_{k+1}$ as an argument of the function $L(\cdot,\cdot)$. Then it is easy to verify that the conditions (A.2)-(A.4) imply the conditions of Lemma 1 of \cite{DengL2019}, and thus the uniform $L^2$ bound holds. \textcolor{black}{Note that given the condition (A.3), the inequality $E\langle \nabla_{\theta} L(\theta_k,\zeta_{k+1}),\theta_k\rangle \geq m E\|\theta_k\|^2-b$, required by \cite{DengL2019} in its proof, will hold as long as $\epsilon$ is sufficiently small or $\beta_1$ is not very close to 1.}
\end{proof}



 Let $\theta_*$ denote the minimizer of $U(\theta)$. Therefore, $\nabla U(\theta_*)=0$. Then, by Lemma \ref{lemmanew2} and condition (A.2), there exists a constant $C_1$ such that
\begin{equation}\label{pathbound}
   E\|\nabla U(\theta_{k})\|^2 \leq 2M^2 (G+\|\theta_*\|^2) :=C_1 <\infty.
\end{equation}

 Let $\xi_{k+1}:=\nabla\tilde U(\theta_{k})-\nabla U(\theta_{k})$ be the gradient estimation error.
 By Lemma \ref{lemmanew2} and condition (A.4), 
 there exists a constant $C_2$ such that 
\begin{equation} \label{pathbound2}
     E(\|\xi_k\|^2) \leq  
     M^2 G+B^2 := C_2< \infty.
\end{equation}

\subsection{Proof of Theorem 3.1}


\begin{proof}
The update of the MSGLD algorithm can be rewritten as 
$\theta_{k+1}=\theta_{k}-\epsilon[\nabla U(\theta_{k})+\zeta_{k+1}]+\sqrt{2\epsilon}e_{k+1}$, where 
$\zeta_{k+1}=am_{k}+\xi_{k+1}$ and $m_0=0$.

First, we study the bias of $\zeta_{k}$.
According to the recursive update rule of $m_i$, we have 
\[
\begin{split}
    &E(\zeta_{k+1}|\mathcal F_k)/a=E(m_k|\mF_k)
    = (1-\beta_1)\nabla U(\theta_{k-1})+    \beta_1 E(m_{k-1}|\mF_k)\\
    =&(1-\beta_1)\nabla U(\theta_{k-1})+ (1-\beta_1)\beta_1\nabla U(\theta_{k-2})
      +\beta_1^2 E(m_{k-2}|\mF_k) =\cdots \\
    =&\sum_{i=1}^{k} (1-\beta_1)\beta_1^{i-1}\nabla U(\theta_{k-i})+\beta_1^{k} E(m_{0}|\mF_k) 
    =\sum_{i=1}^{k} (1-\beta_1)\beta_1^{i-1}\nabla U(\theta_{k-i}). \\
\end{split}
\]
Hence, by Jensen's inequality
\[
\begin{split}
    &\|E(\zeta_{k+1}|\mathcal F_k)\|\leq a\sum_{i=1}^{k} (1-\beta_1)\beta_1^{i-1}\|\nabla U(\theta_{k-i})\| 
     \leq a\sqrt{\sum_{i=1}^{k} (1-\beta_1)\beta_1^{i-1}\|\nabla U(\theta_{k-i})\|^2}. \\
\end{split}
\]
By (\ref{pathbound}), the bias is further bounded by 
 \begin{equation} \label{biasbound}
 \begin{split}
    E\|E(\zeta_{k+1}|\mathcal F_k)\|^2 & \leq a^2{\sum_{i=1}^{k} (1-\beta_1)\beta_1^{i-1}E\|\nabla U(\theta_{k-i})\|^2}  
     \leq a^2 C_1.
\end{split}
\end{equation}

For the variance of $\zeta_{k+1}$, we have 
\[
\begin{split}
    &E\|\zeta_{k+1} - E(\zeta_{k+1}|\mF_k)\|^2=E\|\xi_{k+1}+am_k-E(am_k|\mF_k)\|^2\\
    =&E\|\xi_{k+1}+a(1-\beta_1)\tilde \nabla U(\theta_{k-1})+    a\beta_1 m_{k-1} 
      -a(1-\beta_1)\nabla U(\theta_{k-1})-  a\beta_1 E(m_{k-1}|\mF_k)\|^2\\
    =&E\|\xi_{k+1}+ a(1-\beta_1)\xi_{k} +   a\beta_1 m_{k-1}- a \beta_1 E(m_{k-1}|\mF_k)\|^2 = \cdots \\
    =&E\|\xi_{k+1}+ \sum_{i=1}^k a(1-\beta_1)\beta_1^{i-1}\xi_{k-i+1} \|^2.
\end{split}
\]
Due to the independence among $\xi_k$'s, we have 
\begin{equation}\label{vbound}
\begin{split}
    E\|\zeta_{k+1} - E(\zeta_{k+1}|\mF_k)\|^2
    &\leq E\|\xi_{k+1}\|^2  
     + \sum_{i=1}^k a^2(1-\beta_1)^2\beta_1^{2i-2}E\|\xi_{k-i+1}\|^2\\
     &\leq  C_2 [1+a^2 (1-\beta_1)/(1+\beta_1)],
    \end{split}
\end{equation}
where the last inequality follows from (\ref{pathbound2}). 

Combining (\ref{biasbound}) and (\ref{vbound}), 
we have 
\[
E\|\zeta_{k+1}\|^2 \leq a^2 C_1+  C_2[1+a^2 (1-\beta_1)/(1+\beta_1)] < \infty,
\]
which conclude the proof by applying 
Lemma \ref{lemmanew1} and Chebyshev's inequality. 
\end{proof}

\subsection{Proof of Theorem 3.2}

\begin{proof}
The update of the ASGLD algorithm can be rewritten as 
$\theta_{k+1}
=\theta_{k}-\epsilon[\nabla U(\theta_{k})+\zeta_{k+1}]+\sqrt{2\epsilon}e_{k+1}$, where 
$\zeta_{k+1}=am_{k}\oslash\sqrt{v_{k}+\lambda \bone}+\xi_{k+1}$.

According to the recursive update rule of $m_i$ and $v_i$, we have
\[
\begin{split}
    m_i =& (1-\beta_1)\tilde U(\theta_{i-1})+(1-\beta_1)\beta_1\tilde U(\theta_{i-2}) 
     +(1-\beta_1)\beta_1^2\tilde U(\theta_{i-3})+\cdots\\
    v_i =& (1-\beta_2)\tilde U(\theta_{i-1})\odot \tilde U(\theta_{i-1})+(1-\beta_2)\beta_2\tilde U(\theta_{i-2}) 
     \odot\tilde U(\theta_{i-2})
    +(1-\beta_2)\beta_2^2\tilde U(\theta_{i-3})\odot\tilde U(\theta_{i-3})+\cdots
\end{split}
\]
Therefore, by Cauchy-Schwarz inequality, when $\beta_1^2< \beta_2$ , we have
\[
\begin{split}
& \|m_{i-1}\oslash \sqrt{v_{i-1}}\|_\infty\leq \sqrt{\sum_{j=1}^{i-1} \frac{(1-\beta_1)^2\beta_1^{2j-2}}{(1-\beta_2)\beta_2^{j-1}}} 
\leq \sqrt{\frac{(1-\beta_1)^2}{1-\beta_2}\frac{1}{1-\beta_1^2/\beta_2}}:=C. \\
\end{split}
\]
It implies that 
$\|m_{i-1}\oslash\sqrt{v_{i-1}+\lambda \bone}\|\leq \sqrt{p}C$ almost surely, and in consequence, 
 \begin{equation} \label{biasbound2}
    E\|E(\zeta_{k+1}|\mathcal F_k)\|^2\leq a^2C^2p,
\end{equation}
and, by (\ref{pathbound2}),
\begin{equation}\label{vbound2}
\begin{split}
   E\|\zeta_{k+1} - E(\zeta_{k+1}|\mF_k)\|^2 &\leq E\|\zeta_{k+1}\|^2 
     \leq E\|\xi_{k+1}\|^2+a^2E\|m_{k}\oslash\sqrt{v_{k}+\lambda \bone}\|^2 \\
    & \leq a^2C^2p+C_2.
\end{split}
\end{equation}

Combining (\ref{biasbound2}) and (\ref{vbound2}), we have
\[
E\|\zeta_{k+1}\|^2 \leq 2 a^2C^2p+C_2<\infty,
\]
which concludes the proof by applying Lemma \ref{lemmanew1} and Chebyshev's inequality. 
\end{proof}

\section{Experimental Setup}


All numerical experiments on deep learning were done with pytorch. For all SGMCMC algorithms, 
the initial learning 
rates were set at the order of $O(1/N)$ in all experiments except for in MNIST training. 
For the optimization methods such as SGD and Adam, the objective function was set to  (\ref{objeq3}),
where $f(x_i|\theta)$ denotes the likelihood function of observation $i$, and $\lambda$ is the 
regularization parameter whose value varies for different datasets. 

\paragraph{Multi-modal distribution}

 For NGVI method, we chose a 5-component mixture Gaussian distribution, and set the initial parameters as $\pi_i=\frac{1}{5}$, $\mu_i\sim N(0_2,1.5I_2)$, $\Sigma_i=I_2$ for $i=1,\dots,5$, and the learning rate $\epsilon=0.005$. For MSGLD, we set $(a,\beta_1)=(10, 0.9)$ and the learning rate $\epsilon=0.05$. For Adam SGLD, we set $(a, \beta_1,\beta_2)=(1, 0.9,0.999)$ and the learning rate $\epsilon=0.05$. The CPU time limit was set to $6$ minutes.

\paragraph{Distribution with long narrow ravines} 
Each algorithm was run for $30,000$ iterations with the settings of specific parameters given in Table \ref{ravinetab}. 

\begin{table}[htbp]
\caption{Parameter setting for the distribution with long narrow ravines}
\label{ravinetab}
\begin{center}
\begin{tabular}{lccccc}
\toprule
Method & Initial value & $\beta_1$ & $\beta_2$ & \text{a} & $\lambda$ \\
\midrule
SGLD & $1e-4$& & & \\
SGHMC & $1e-5$& 0.9& &\\
pSGLD & $1e-4$& 0.9& &&$1e-6$\\
ASGLD & $1e-4$& 0.9&0.999 &1000& $1e-5$\\
MSGLD & $1e-4$& 0.99 & &10&\\
\bottomrule
\end{tabular}
\end{center}
\end{table}

\paragraph{Landsat}
Each algorithm was run for 3000 epochs with the settings of specific parameters given in 
Table \ref{UCItab}.

\begin{table}[htbp]
\caption{Parameter setting for the Landsat data example}
\label{UCItab}
\begin{center}
\begin{tabular}{lcccccc}
\toprule
Method & Initial value & $\beta_1$ & $\beta_2$ & \text{a} & $\lambda$  \\
\midrule
SGLD & $0.1/4435$& & & &\\
SGHMC &$0.1/4435$& 0.9& &&\\
pSGLD & $0.1/4435$& 0.9& &&$1e-5$\\
ASGLD & $0.1/4435$& 0.9&0.999 &10& $1e-5$\\
MSGLD & $0.1/4435$& 0.9 & &5&\\
\bottomrule
\end{tabular}
\end{center}
\end{table}

\paragraph{MNIST} 

Each algorithm was run for 250 epochs, where the first 100 epochs were run 
with the conventional Gaussian prior $N(0,1)$, and the followed 150 epochs were run with the mixture Gaussian prior given in the paper.
For Adam, the objective function was set as (\ref{objeq3}) 
 with $\lambda=0$ for all 250 epochs. 
The settings of the specific parameters were given in Table 
\ref{MNISTtab1}.

\begin{table}[htbp]
\caption{Parameter settings for MNIST before (stage I) and after (stage II) sparse learning}
\label{MNISTtab1}
\begin{center}
\begin{adjustbox}{width=1.0\textwidth}
\begin{tabular}{lcccccccccccccc} \toprule
 & \multicolumn{6}{c}{Stage I} & & \multicolumn{6}{c}{Stage II} \\ \cline{2-7} \cline{9-14}
Method & Initial & $\beta_1$ & $\beta_2$ & \text{a} & $\lambda$  &$\tau$ & & Initial & $\beta_1$ & $\beta_2$ & \text{a} & $\lambda$ & $\tau$ \\
\midrule
ADAM & $0.001$&0.9 &0.999 & &$1e-8$& & & $0.0001$&0.9 &0.999 & &$1e-8$ &  \\
ASGLD & $0.5$& 0.9&0.999 &10& $1e-6$&$1e-2$ & & $0.5/10$& 0.9&0.999 &1& $1e-8$& $1e-5$ \\
MSGLD & $0.5$& 0.99 & &1&&$1e-3$ & & $0.5/10$& 0.99 & &1&&$1e-5$ \\ \bottomrule
\end{tabular}
\end{adjustbox}
\end{center}
\end{table}

\paragraph{Cifar-10 and Cifar-100}

For SGD, the objective function was set as (\ref{objeq3}) with $\lambda=5.0e-4$. 
For Adam, the objective function was set as (\ref{objeq3}) with $\lambda=0$. The settings of specific parameters are given in Table \ref{cifar10-100}.

\begin{table}[htbp]
\caption{Parameter settings for CIFAR-10 and CIFAR-100}
\label{cifar10-100}
\begin{center}
\begin{adjustbox}{width=1.0\textwidth}
\begin{tabular}{lccccccccccc} \toprule
 & \multicolumn{5}{c}{CIFAR-10}& & \multicolumn{5}{c}{CIFAR-100} \\ \cline{2-6} \cline{8-12}
Method & Initial & $\beta_1$ & $\beta_2$ & \text{a} & $\lambda$ & & Initial & $\beta_1$ & $\beta_2$ & \text{a} & $\lambda$ \\ \midrule
SGD & $0.1$& & & &   &  & $0.1$& & & & \\
ADAM & $0.001$& 0.9& 0.999& &$1e-8$ & & $0.001$& 0.9& 0.999& &$1e-8$ \\
SGLD & $0.1/50000$& & & & & & $0.1/50000$& & & & \\
SGHMC & $0.1/50000$& 0.9& && & &$0.1/50000$& 0.9& &&  \\
pSGLD & $0.001/50000$& 0.99& &&$1e-6$ & & $0.0001/50000$& 0.99& &&$1e-6$ \\
ASGLD & $0.1/50000$& 0.9&0.999 & 20 & $1e-6$& & $0.1/50000$& 0.9&0.999 & 20 & $1e-8$\\
MSGLD &$0.1/50000$& 0.9 & &1& &&$0.1/50000$& 0.9 & &1& \\ \bottomrule
\end{tabular}
\end{adjustbox}
\end{center}
\end{table}

\bibliographystyle{asa}
\bibliography{JCGS2c}

\end{document}